\documentclass{article} 
\usepackage{iclr2026_conference,times}

\usepackage{amsmath,amsfonts,bm}

\def\eqref#1{equation~\ref{#1}}

\def\1{\bm{1}}

\DeclareMathAlphabet{\mathsfit}{\encodingdefault}{\sfdefault}{m}{sl}
\SetMathAlphabet{\mathsfit}{bold}{\encodingdefault}{\sfdefault}{bx}{n}

\usepackage{algorithm}
\usepackage{algpseudocode}
\algrenewcommand\algorithmicindent{1em}
\usepackage{amsthm}
\theoremstyle{plain}
\newtheorem{prop}{Proposition}
\usepackage[colorlinks=true, linkcolor=red, citecolor=blue]{hyperref}
\usepackage{url}
\usepackage{graphicx}
\usepackage{booktabs}
\usepackage{multirow}
\usepackage[table]{xcolor}
\usepackage{wrapfig}
\definecolor{myblue}{HTML}{006794}
\usepackage{caption}
\usepackage{enumitem}
\usepackage{titlesec}
\usepackage{subcaption}

\newcommand{\gup}[1]{
  \textcolor{cyan!60!blue}{\scriptsize+\,#1}
}

\definecolor{revblue}{RGB}{0, 90, 200}

\title{Multimodal Dataset Distillation \\ via Phased Teacher Models}

\author{
Shengbin Guo\textsuperscript{1}\thanks{
\mbox{Equal contribution.\hspace{0.8em}\textsuperscript{\textdagger}Corresponding author: \texttt{tianzhuotao@hit.edu.cn}.}
}\quad,
Hang Zhao\textsuperscript{1}\footnotemark[1]\kern4pt, \quad
Senqiao Yang\textsuperscript{2}, \quad
Chenyang Jiang\textsuperscript{1} \quad\\
\textbf{Yuhang Cheng}\textsuperscript{\textbf{1}}, \quad
\textbf{Xiangru Peng}\textsuperscript{\textbf{1}},\quad
\textbf{Rui Shao}\textsuperscript{\textbf{1}},\quad
\textbf{Zhuotao Tian}\textsuperscript{\textbf{1\textdagger}}
\\
\textsuperscript{1}Harbin Institute of Technology, Shenzhen  \quad \textsuperscript{2}The Chinese University of Hong Kong
\\
\fontsize{9pt}{10pt}\selectfont
\texttt{shengbinguo2022@gmail.com,\; 200110431@stu.hit.edu.cn}
}

\iclrfinalcopy 
\begin{document}

\maketitle

\begin{abstract}
Multimodal dataset distillation aims to construct compact synthetic datasets that enable efficient compression and knowledge transfer from large-scale image-text data. However, existing approaches often fail to capture the complex, dynamically evolving knowledge embedded in the later training stages of teacher models. This limitation leads to degraded student performance and compromises the quality of the distilled data. To address critical challenges such as pronounced cross-stage performance gaps and unstable teacher trajectories, we propose Phased Teacher Model with Shortcut Trajectory (PTM-ST)—a novel phased distillation framework. PTM-ST leverages stage-aware teacher modeling and a shortcut-based trajectory construction strategy to accurately fit the teacher’s learning dynamics across distinct training phases. This enhances both the stability and expressiveness of the distillation process. Through theoretical analysis and comprehensive experiments, we show that PTM-ST significantly mitigates optimization oscillations and inter-phase knowledge gaps, while also reducing storage overhead. Our method consistently surpasses state-of-the-art baselines on Flickr30k and COCO, achieving up to 13.5\% absolute improvement and an average gain of 9.53\% on Flickr30k.
Code: \url{https://github.com/Previsior/PTM-ST}.
\end{abstract}

\section{Introduction}
Large-scale multimodal models such as CLIP~\citep{CLIP}, BLIP~\citep{blip}, and Flamingo~\citep{flamingo} have established image-text pairs as a fundamental aspect of visual-language pre-training (VLP).
Their successors, e.g., BLIP-2~\citep{blip2} and LLaVA~\citep{LLaVA}, further extend this paradigm to multimodal large language models (MLLMs)~\citep{jia2021scaling, chen2023revisiting}.
However, the application of such extensive multimodal data incurs substantial storage as well as computational overhead \citep{hoffmann2022training, kang2024get}, highlighting an urgent need for efficient data utilization and improved model training efficiency.

Dataset Distillation (DD) \citep{DD, lei2023comprehensive} offers a promising solution to this challenge by synthesizing a compact surrogate dataset that compresses the dataset by orders of magnitude without hurting performance~\citep{zolfaghari2021crossclr, lin2022multimodal}.
Despite notable progress in unimodal tasks across text~\citep{li2021data}, video~\citep{wang2024dancing} and graph~\citep{xu2023kernel} domains, directly extend DD to multimodal scenarios presents considerable difficulties.

Although a few recent studies (e.g., MTT-VLL~\citep{MTT-VL} and LoRS~\citep{LoRS}) have attempted to extend dataset distillation methods to multimodal tasks, these approaches primarily focus on directly adapting existing unimodal distillation strategies to multimodal scenarios or enhancing cross-modal alignment through the incorporation of low-rank similarity matrices. Overall, such methods largely remain at a superficial level, emphasizing the design of data structures or distance metrics, and concentrate mainly on representation and alignment challenges in multimodal learning~\citep{du2023sequential, du2024diversity}. However, they fall short of investigating the underlying mechanistic differences between multimodal and unimodal settings in the context of dataset distillation, differences that likely govern the stability, efficiency, and ultimate effectiveness of any practical distillation strategy. This gives rise to a fundamental scientific question: \textit{What are the essential distinctions between multimodal and unimodal dataset distillation tasks and how should we improve?}

\begin{wrapfigure}{r}{0.55\textwidth}
    \centering
    \includegraphics[width=0.5\textwidth]{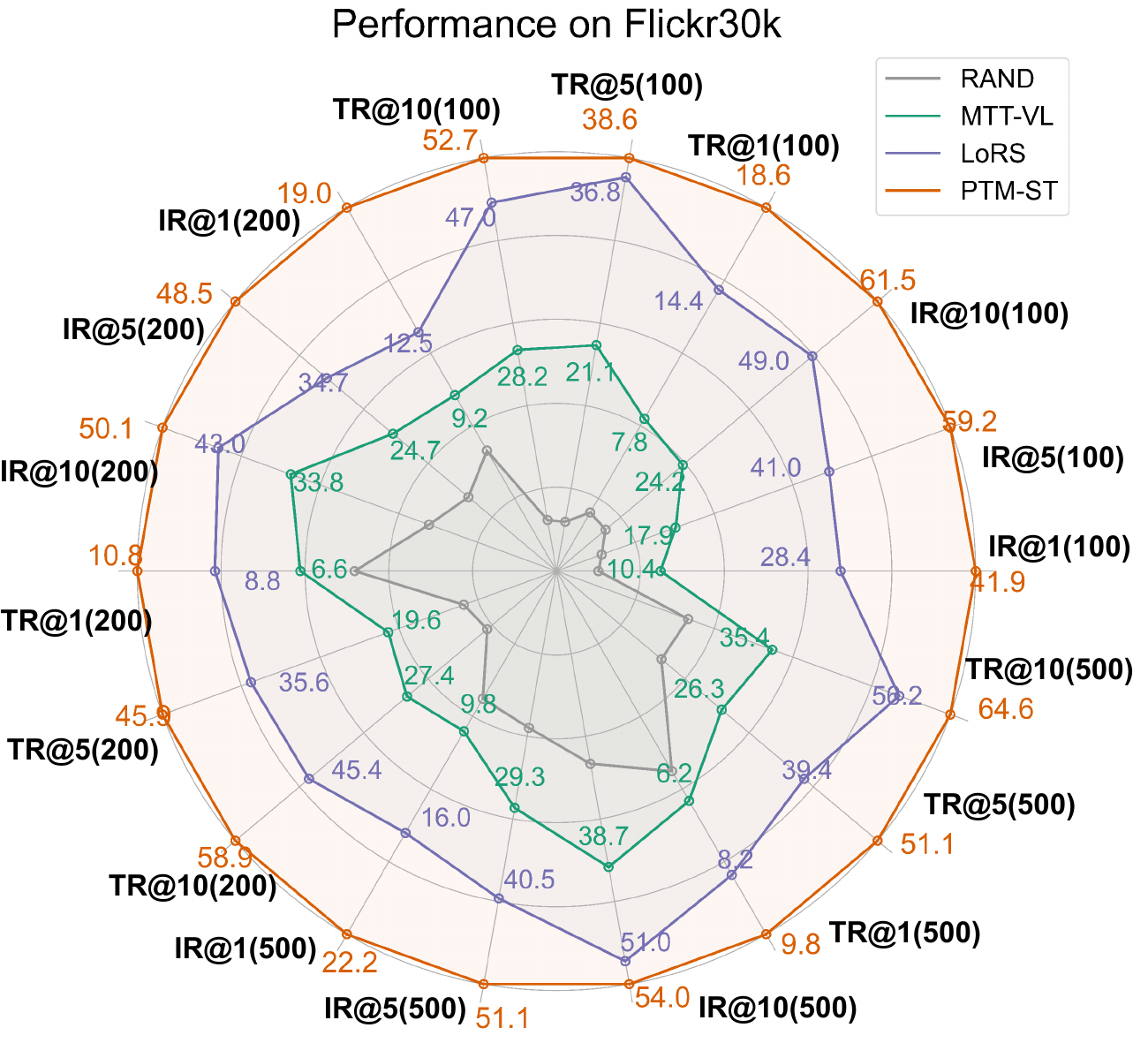}
    \caption{PTM-ST has achieved far superior performance than other selection or distillation methods in different metrics on Flickr30k.}
    \label{fig:moti:Radar}
\end{wrapfigure}

\paragraph{Key Observation.}
To explore the fundamental distinctions between multimodal and unimodal dataset distillation, we conduct a comparative study under a unified MTT framework. Our results reveal that, unlike unimodal tasks—where teacher guidance remains effective throughout the whole training process—multimodal distillation benefits primarily from the early 20–30\% of training epochs. As shown in Figure~\ref{fig:Motivation}~(a), although teacher model continues to improve in later stages, using it directly for distillation leads to a sharp drop in student performance.

We hypothesize that this limitation stems from the inherent sparsity of multimodal data and the absence of explicit semantic constraints, which lead the teacher model to encode substantially different forms of knowledge at different training stages. As a result, existing distillation methods struggle to effectively leverage the knowledge embedded in the teacher model across stages to support the continual learning of the student model.
These findings underscore the necessity of developing a distillation framework that can dynamically adapt to the evolving knowledge representation of the teacher model, thereby enabling the student to absorb intermediate signals more effectively, mitigate representation drift, and achieve stable, continual improvements in downstream tasks.

\paragraph{Our Solution.}

To address the limitations of conventional distillation strategies in modeling the evolving dynamics of teacher models, we propose a novel framework: \textbf{P}hased \textbf{T}eacher \textbf{M}odeling with \textbf{S}hortcut \textbf{T}rajectory (\textbf{PTM-ST}). Built upon the dual-loop architecture of the original MTT framework, PTM-ST introduces two core components: (1) Phased Teacher Modeling (PTM), which decomposes the overall distillation objective into a sequence of sub-tasks, each associated with a phase-specific teacher model that adapts to the semantic evolution across training stages; and (2) Shortcut Trajectory (ST), which constructs a smoothed and stabilized trajectory to guide student learning, mitigating the instability and noise often present in late-stage teacher updates. 

By capturing stage-specific knowledge more precisely and ensuring stable knowledge transfer, PTM-ST significantly improves the quality of distilled data and facilitates more effective student training. As shown in Figure~\ref{fig:moti:Radar}, we conducted extensive experiments on the Flickr30K~\citep{flickr30k} and MS-COCO~\citep{coco} datasets, validating the superior performance and robustness of our method compared to existing state-of-the-art approaches.

To summarize, our contributions are as follows:
\begin{itemize}[leftmargin=1.2em]
    \item We identify and analyze the phased knowledge gap in multimodal dataset distillation, validating the phenomenon through visualization and supporting theoretical analysis.

    \item We propose the PTM-ST framework, combining a phased distillation strategy with trajectory endpoint matching, significantly improving distillation stability and performance.
    
    \item Our method outperforms existing techniques in multimodal image-text retrieval distillation, pushing the boundaries of multimodal distillation research and its applications.

\end{itemize}
\section{Preliminaries}\label{sec:bg&motivation}
In this section, we briefly review foundational concepts relevant to Section~\ref{sec:2.1} to establish background for our work. Then, Section~\ref{sec:2.2} details our key insights and design motivations.

\subsection{Background}\label{sec:2.1}

\paragraph{Multimodal Dataset Distillation.}
Dataset distillation seeks to synthesize a compact, representative subset that effectively approximates the statistical properties~\citep{wang2022cafe, wang2025dataset, distribution_matching, lee2022dataset, remember} and training dynamics~\citep{PDD, du2024diversity, shao2024generalized, SRe2L} of large-scale datasets. Multimodal dataset distillation (MDD) extends this concept from unimodal data (e.g., images) to multimodal data (e.g., image-text pairs). Current approaches predominantly fall into meta-learning-based~\citep{zhou2022dataset} and distribution-matching-based~\citep{lee2024selmatch} strategies. While these methods perform well on unimodal benchmarks~\citep{leim3}, their extension to multimodal settings remains challenging due to the intrinsic complexity and heterogeneity of multimodal data~\citep{chen2023shikra, peng2024grounding, aljundi2019gradient, aljundi2019online}, requiring further exploration.

\begin{wrapfigure}{r}{0.6\textwidth}
    \centering
    \includegraphics[width=1\linewidth]{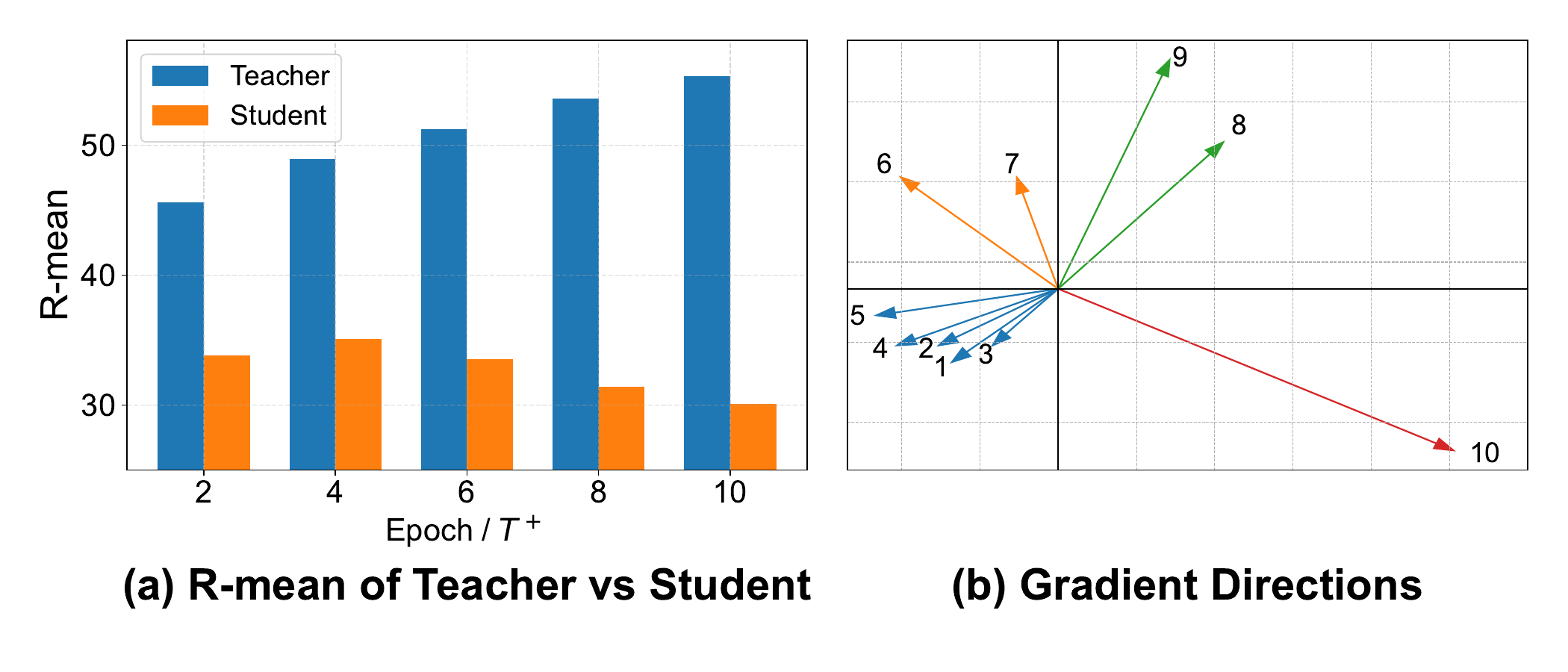}
    \caption{\textbf{(a)} shows that using a late-stage, high-performing teacher during distillation leads to a decline in student performance. \textbf{(b)} shows the directions of different matching range gradients (after PCA dimensionality reduction).}
    \label{fig:Motivation}
\end{wrapfigure}

\paragraph{The Baseline.}
We adopt LoRS~\citep{LoRS} as the baseline for MDD, which is based on the Match-Training-Trajectory (MTT) framework~\citep{MTT, MTT-VL, cui2023scaling}. MTT optimizes a synthetic dataset $\tilde{\mathcal{D}}$ by aligning the parameter trajectories of models trained separately on the synthetic data and the real dataset $\mathcal{D}$.

Concretely, a teacher model is trained on $\mathcal{D}$ for $n$ epochs, producing a trajectory $\tau = \{\theta_0, \theta_1, \dots, \theta_n\}$ that captures the parameter evolution. Let $T^-$ be the lower bound and $T^+$ be the upper bound on the sampling range of $T$, 
where $T^-$ is generally set to 0. The student model is then trained on synthetic dataset $\tilde{\mathcal{D}}$ to match the teacher’s trajectory from $\theta_T$, minimizing the discrepancy between their parameter states after additional training steps. Formally, the objective is to minimize:

\begin{equation}
\mathcal{L}_{MTT}(\tilde{\mathcal{D}}, \theta_T) = \frac{||\tilde{\theta}_{T+t} - \theta_{T+\Delta T}||_2^2}{||\theta_T - \theta_{T+\Delta T}||_2^2},
\label{eq:MTTLoss}
\end{equation}

\noindent where $\tilde{\theta}_{T+t}$ denotes the model parameters obtained by training $t$ steps on $\tilde{\mathcal{D}}$ starting from $\theta_T$, and $\Delta T$ represents the length of matching range in $\tau$.
Building upon this foundation, LoRS applied an augmentation technique to the synthetic dataset to enhance its expressive capability.
More details about previous work can be found in Appendix~\ref{app:sec:related_work}.

\begin{figure*}[htbp]
    \centering
    \includegraphics[width=1\linewidth]{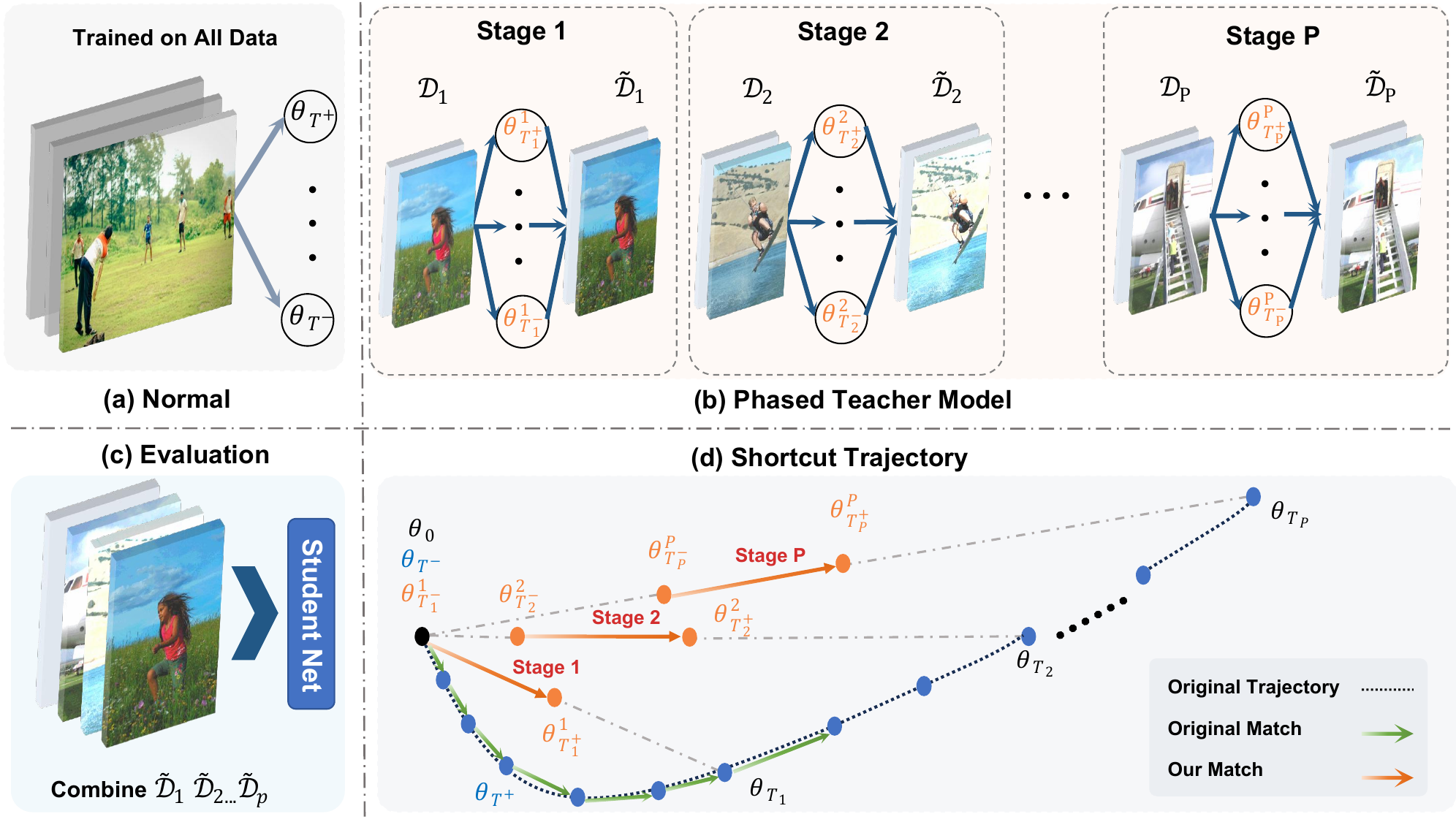}
    \caption{\textbf{(a)} shows the conventional single-stage training with a fixed teacher model and uniform data use. In contrast, \textbf{(b)} depicts our Phased Teacher Model (PTM), which employs different teacher models across multiple training stages to distill knowledge to specific data subsets. \textbf{(c)} illustrates the aggregation of all distilled subsets for final student evaluation. Additionally, \textbf{(d)} presents our Shortcut Trajectory (ST) strategy that dynamically generates stage-adaptive teacher models, improving distillation effectiveness and robustness.}
    \label{fig:method}
\end{figure*}

\subsection{Motivation}\label{sec:2.2}

\paragraph{Knowledge gap across learning stages.}
Despite recent progress in multimodal dataset distillation (MDD), existing methods predominantly rely on the early-stage checkpoints of the teacher model—typically within the first 20–30\% of training epochs ($T^+ \leq 0.3n$). However, as shown in Figure~\ref{fig:Motivation}~(a), while the teacher’s performance continues to improve in later epochs, directly incorporating these late-stage checkpoints often leads to degraded student performance. This suggests that \textit{knowledge discrepancies emerge across different training stages}, and the student model struggles to assimilate the teacher’s late-stage knowledge effectively, compromising the distillation results.

To further investigate this phenomenon, inspired by DDGM~\citep{DDWithGM}, we analyze the gradients propagated from the teacher to the synthetic dataset at various epochs. As illustrated in Figure~\ref{fig:Motivation}~(b), the gradient norm increases progressively throughout training, implying intensified update signals. However, the gradient directions exhibit high variability and inconsistency, revealing substantial instability in the learning signals over time. These observations highlight two critical challenges: pronounced stage-wise knowledge shifts and unstable optimization dynamics in later training phases—both of which hinder effective knowledge transfer in current MDD frameworks.

\paragraph{Limitations of existing methods.}

Existing multimodal dataset distillation (MDD) approaches~\citep{LoRS} typically rely on a single synthetic subset, which subjects the distilled dataset to persistently high-magnitude and unstable gradient updates. Such instability compromises the consistency and reliability of knowledge transfer, ultimately exacerbating the performance gap between teacher and student models. Moreover, recent findings~\citep{DATM, PDD} highlight that the generalization capability of deep models emerges from their progressive assimilation of training samples across different learning stages. This suggests that conventional single-subset MDD strategies are fundamentally constrained in their ability to capture the full spectrum of knowledge distributed throughout the training process. For more analysis, see Appendix~\ref{app:sec:moti_details}.

Motivated by these insights, we propose a phased distillation mechanism that enables the student model to incrementally align with the teacher’s knowledge trajectory across successive learning stages. By decomposing the distillation process into multiple temporal phases, our approach facilitates more stable gradient updates and improves the fidelity of knowledge transfer from the original dataset to the synthetic one, therefore enhancing the distillation effect.

\begin{algorithm*}[!t]
\caption{Distillation: Phased Teacher Model with Shortcut-Trajectory}
\label{alg:ptm-st}
\textbf{Input:}
Real dataset $\mathcal{X}, \mathcal{Y}$.
Size, iteration, matching range, interpolation endpoint of each subset: $\{(N_p, I_p, T^-_p, T^+_p, t_p)\}_{p=1}^P$.Decay parameter $\alpha$,learning rate $\gamma$.Synthesis steps $t$, expert epochs $\Delta T$. \\
\textbf{Output:}
A series of synthetic datasets:
$\tilde{\mathcal{D}}_1, \tilde{\mathcal{D}}_2, \ldots, \tilde{\mathcal{D}}_P$.
\begin{algorithmic}[1]
\State \textbf{Generating synthetic subsets:}
\For{$p = 1, 2, \ldots, P$}
    \State Initialize $\tilde{\mathcal{X}}_p, \tilde{\mathcal{Y}}_p$ by real data, $\tilde{S}_p = \mathbb{I}_{N_p}$ (identity matrix). $\hat{\mathcal{D}}^0_p = (\tilde{\mathcal{X}}_p, \tilde{\mathcal{Y}}_p, \tilde{S}_p)$
    \State Calculate Shortcut-Trajectory $\{\theta_{T^{-}_{p}}^p, \ldots, \theta_{T^{+}_{p}}^p\}$ based on $\theta_0$ and $\theta_{t_p}$
    \For{$i = 1, 2, \ldots, I_p$}
        \State Sample an initial network parameter $\theta_T^p$ in $\{\theta_{T^{-}_{p}}^p, \ldots, \theta_{T^{+}_{p}}^p\}$
        \State Train the network for $t$ steps on $\tilde{\mathcal{D}}_p$ to $\tilde{\theta}_T^p$
        \State Compute MTT loss $\mathcal{L}_{\text{MTT}} = {\|\tilde{\theta}_T^p - \theta^p_{T + \Delta T}\|^2} / {\|\theta^p_T - \theta^p_{T + \Delta T}\|^2}$
        \State Gradient descent:
        $\tilde{\mathcal{X}}_p \leftarrow \tilde{\mathcal{X}}_p - \gamma_{\mathcal{X}} \nabla_{\tilde{\mathcal{X}}_p} 	\mathcal{L}_{\text{MTT}},
        \tilde{\mathcal{Y}}_p \leftarrow \tilde{\mathcal{Y}}_p - \gamma_{\mathcal{Y}} \nabla_{\tilde{\mathcal{Y}}_p} 	\mathcal{L}_{\text{MTT}},$
        \Statex \hspace{2em} $\tilde{S}_p \leftarrow \tilde{S}_p - \gamma_S \nabla_{\tilde{S}_p} 	\mathcal{L}_{\text{MTT}}$
        \State EMA:
        $\tilde{\mathcal{D}}^{i}_{p} \leftarrow (\tilde{\mathcal{X}}_p, \tilde{\mathcal{Y}}_p, \tilde{S}_p)$, $\hat{\mathcal{D}}^{i}_{p} \leftarrow \alpha \hat{\mathcal{D}}^{i-1}_p + (1 - \alpha) \tilde{\mathcal{D}}^{i}_{p}$
    \EndFor
    \State $\tilde{\mathcal{D}}_p \leftarrow \hat{\mathcal{D}}^{I_p}_p $
\EndFor
\end{algorithmic}
\end{algorithm*}
\section{Method}\label{sec:method}
This section introduces the Phased Teacher Model (PTM) with Shortcut Trajectory (ST), an approach designed to address the challenge highlighted in Section~\ref{sec:2.2}. 
The overall framework is illustrated in Figure~\ref{fig:method}. Section~\ref{subsec:PTM} show the design of PTM, followed by the ST strategy in Section~\ref{subsec:ST}.

\subsection{Phased Teacher Model}\label{subsec:PTM}

As mentioned in the Section~\ref{sec:2.2}, simply switching the teacher model according to the training phase does not yield satisfactory results. To better capture stage-dependent learning dynamics and provide smoother guidance, we propose the Phased Teacher Model (PTM). This strategy progressively introduces different teacher models during training, allowing the distilled dataset to absorb as much information as possible from different teacher models at each stage.

Specifically, as shown in Figure~\ref{fig:method}~(b), we divide the distillation process into P stages, with each stage independently distilling a small subset $\tilde{\mathcal{D}}_p$, where $p=1, 2, ..., P$ denotes the stage id.

In our method, we dynamically adjust the sampling range of the trajectory matching starting point. The sampling range for stage $p$ is $\{T_p^-,..., T_p^+\}$, and this range evolves as training progresses. This means that, at each stage, we use a different teacher model to guide the update of the distillation dataset, allowing each subset to focus on distilling different aspects of knowledge. As analysed in Section~\ref{sec:main_results}, the union of all subsets, $\tilde{\mathcal{D}}_1 \cup \cdots \cup \tilde{\mathcal{D}}_P$, can thus capture the complete training dynamics.

During testing, we adopt a progressive training scheme. Specifically, the model is first trained on $\tilde{\mathcal{D}}_1$, then on $\tilde{\mathcal{D}}_2$, and so on. 
This corresponds to the different stages of our distillation process, allowing students to gradually extract the teacher model's training dynamics.

Formally, we generated the dataset for stage $p$ as follows:

\begin{equation}
\tilde{\mathcal{D}_p^*} = \arg \min_{\tilde{\mathcal{D}_p}} \mathbb{E}_{T \sim (T_p^-, ... T_p^+)}\ \mathcal{L}_{PTM}\bigl(\tilde{\mathcal{D}_p}, \theta_T^p),
\label{eq:subset_generation}
\end{equation}

\noindent where $\mathcal{L}_{PTM}$ denotes the trajectory matching loss defined in Eq.~\ref{eq:PTM_Loss}:

\begin{equation}
\mathcal{L}_{PTM}\bigl(\tilde{\mathcal{D}_p}, \theta_T^p) = \frac{||\tilde{\theta}_{T+t}^p - \theta_{T+\Delta T}^{p}||_2^2}{||\theta_T^p - \theta_{T+\Delta T}^p||_2^2}\ \ \ \  s.t.\ \tilde{\theta}_{T+t}^p = \arg\min_{\theta}l(\tilde{\mathcal{D}}_p; \theta_T^p),
\label{eq:PTM_Loss}
\end{equation}

\noindent where $l(\tilde{\mathcal{D}}; \theta)$ denotes the loss of the student model trained on the distillation dataset $\tilde{\mathcal{D}}$ with parameter $\theta$, and $\theta_T^p$ denotes the updated teacher model parameters generated according to the Shortcut Trajectory (ST) strategy, which will be elaborated in the subsequent section.

\subsection{Shortcut Trajectory}\label{subsec:ST}

PTM enables the distilled dataset to capture multi-stage knowledge throughout the training process, yet it does not guarantee optimal distillation quality at every stage. To investigate the influence of teacher models at different stages, we compute the trajectory loss gradients of the distilled data under various alignment scopes and visualize their cosine similarities.

As shown in Figure~\ref{fig:cos_sim}~(a), the gradient similarities across different alignment starting points remain low along the original trajectories, indicating high variance in the update directions of the distilled data, even in the later, more stable training stages. This instability hinders effective and efficient knowledge transfer, further corroborating the observations in Section~\ref{sec:bg&motivation}.

To address this, we propose \textit{Shortcut Trajectory}. Rather than fitting the full trajectory directly, it retains key endpoint information and introduces intermediate teacher models with stronger structure and guidance. These intermediates assist in aligning endpoints, resulting in new trajectories with more stable optimization targets and clearer update directions.

\begin{wrapfigure}{r}{0.5\textwidth}
  \centering
  \includegraphics[width=1\linewidth]{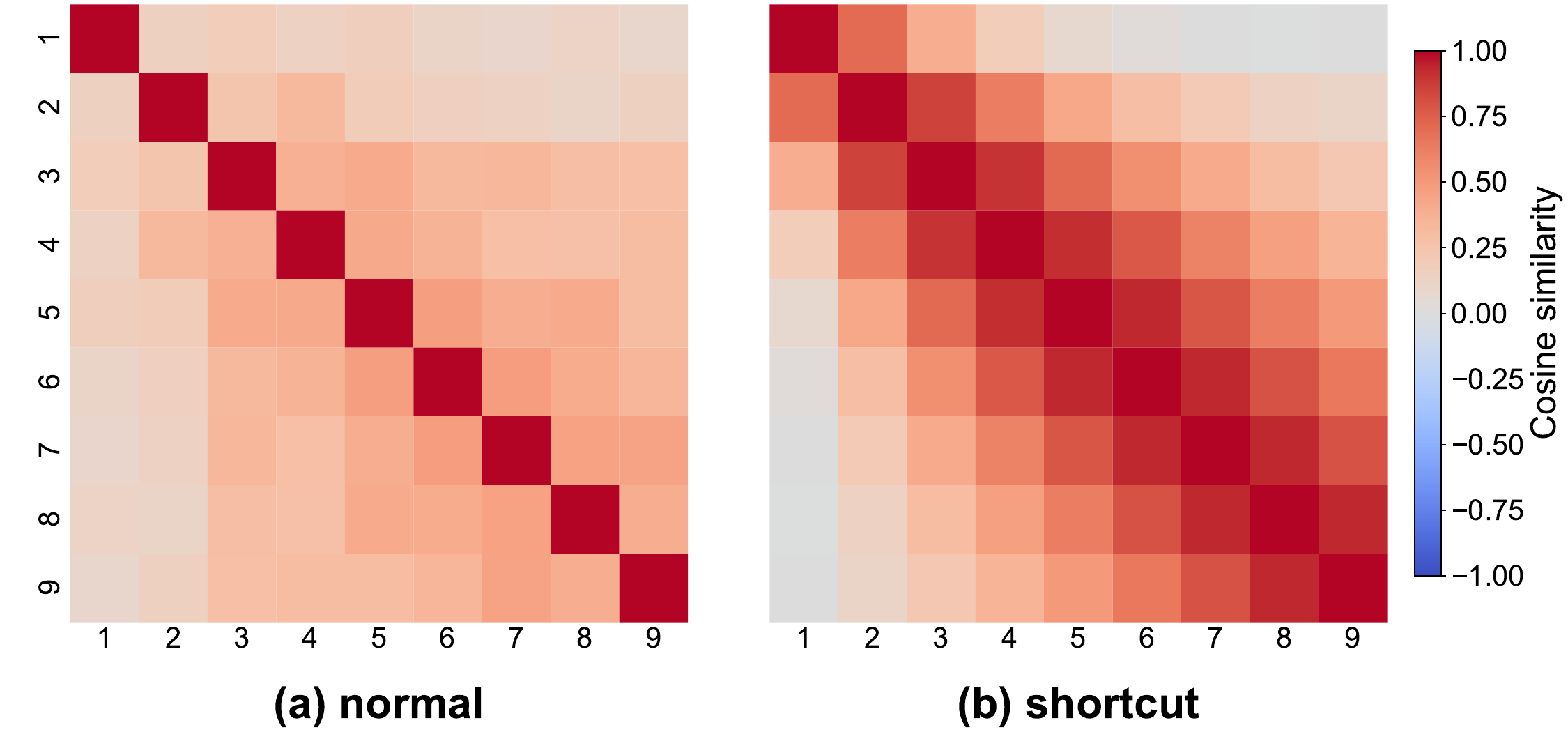}
  \caption{Gradient cosine similarity on the synthetic dataset across different epochs for the original and shortcut trajectories.}
  \label{fig:cos_sim}
\end{wrapfigure}

Formally, we set an endpoint $t_p$ for stage $p$. The trajectory $\tau^p_{\text{conv}} = \{\theta^p_t \mid 0 \leq t \leq t_p\}$ is defined as
\begin{equation}
\theta^p_t = (1 - \beta_p(t)) \theta_0 + \beta_p(t) \theta_{t_p},
\label{eq:conv_traj}
\end{equation}
where $\beta_p(t) \in (0,1)$ serves as a weighting function that governs the placement of waypoints along the trajectory. Here, the start and end points, $\theta_0$ and $\theta_{t_p}$, are exactly $\theta^p_0$ and $\theta^p_{t_p}$ from $\tau_{\text{conv}}$, respectively. In essence, $\tau_{\text{conv}}$ provides a direct interpolation from $\theta_0$ to $\theta_{t_p}$.

The weighting function $\beta_p(t)$ is computed proportionally based on the accumulated distance along the original trajectory $\tau_{\text{mtt}}$, as follows:
\begin{equation}
\beta_p(0) = 0, \quad
\beta_p(t) = \frac{\sum_{l=0}^{t-1} \mathrm{Norm}(\theta_{l+1} - \theta_l)}{\sum_{l=0}^{t_p-1} \mathrm{Norm}(\theta_{l+1} - \theta_l)},
\label{eq:beta}
\end{equation}

\noindent where $\mathrm{Norm}(\cdot)$ denotes the $\ell_2$-norm. To mitigate inconsistencies across different network layers, we compute layer-wise $\ell_2$-normalization, yielding a vector $\beta_p(t) = [\beta^1_p(t), \beta^2_p(t), \ldots, \beta^L_p(t)]^\top$, where each component corresponds to a distinct layer parameter in the network.

Unlike Matching Convexified Trajectory~\citep{MCT} interpolation strategy, which uses only the last point of the teacher's trajectory, we use different interpolation endpoints for each stage.
As illustrated in Figure~\ref{fig:method}~(d), this strategy dynamically generates matching trajectories at each stage by utilizing the earliest and latest teacher models of the current phase, thereby effectively capturing inter-epoch variations in the teacher parameters.

As shown in Figure~\ref{fig:cos_sim}~(b)\footnotemark[1], the gradient similarity of the trajectory generated by our method is significantly improved, noting the optimization process of distillation dataset becomes more stable. 

In addition, we have mathematically proven this gradient phenomenon, which can be summarised in the following proposition. The details and proof of this proposition are given in Appendix~\ref{app:sec:proof}.

\begin{prop}
Let $\ell(\tilde{\mathcal{D}},\theta)$ denote the comparative learning loss on the distillation dataset $\tilde{\mathcal{D}}$ when the model parameter is $\theta$. Let $\mathcal{L}_1(\tilde{\mathcal{D}})$, $\mathcal{L}_2(\tilde{\mathcal{D}})$ denote the matching loss at two different ranges on the \textbf{interpolated trajectory}, with the difference between the starting points as $\Delta t$, $M$ is the maximum length of matching trajectory, then under the following assumptions: 
\begin{enumerate}
  \item The loss \(\ell(\tilde{\mathcal{D}},\theta)\) is twice continuously differentiable in \(\theta\), and its Hessian 
    \(
      H_t := \nabla^2_{\theta\theta}\,\ell(\tilde{\mathcal{D}},\theta_t)
    \)
    along the trajectory \(\{\theta_t\}\) satisfies a Lipschitz condition
    \(\|H_{t'}-H_t\|\le L_H\,|t'-t|\), and in addition \(\|H_t\|\le H_{\max}\) for some constant \(H_{\max}\);
  \item The mixed Hessian
    \(\nabla^2_{\tilde{\mathcal{D}}\theta}\,\ell(\tilde{\mathcal{D}},\theta)\)
    has spectral norm bounded by \(L_{\tilde{\mathcal{D}}\theta}\).
\end{enumerate}
Then the following conclusion holds:
\begin{equation}\label{eq:main_bound}
  \bigl\lVert \nabla_{\tilde{\mathcal{D}}} \mathcal{L}_2(\tilde{\mathcal{D}}) 
    - \nabla_{\tilde{\mathcal{D}}} \mathcal{L}_1(\tilde{\mathcal{D}}) \bigr\rVert
  \;\le\; K\,\Delta t + \mathcal O(\Delta t^2),\\[2pt]
  \ \text{where}\ K = \frac{\eta}{M\,\lVert\Delta\theta\rVert}
       \bigl(L_H\lVert\Delta\theta\rVert + L_{\tilde{\mathcal{D}}\theta}\bigr).
\end{equation}
\end{prop}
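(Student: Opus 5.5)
The plan is to linearize the inner student update around each matching starting point. The essential observation is that on the interpolated trajectory (Eq.~\ref{eq:conv_traj}) consecutive waypoints differ by a fixed direction. Indeed $\theta_{t'}-\theta_t=(\beta_p(t')-\beta_p(t))(\theta_{t_p}-\theta_0)$, which we abbreviate as $\Delta\theta$ up to the scalar weight. As a result, the normalizing denominator and the target displacement in $\mathcal{L}_{PTM}$ are collinear with $\Delta\theta$ for every starting point. We will model the inner loop as plain gradient descent with step size $\eta$ and treat the $\Delta T$-range target as $\theta_T+c\,\Delta\theta$. We may then write the student endpoint as
\[
\tilde{\theta}_{T+t}=\theta_T-\eta\sum_{k=0}^{t-1}\nabla_\theta\ell(\tilde{\mathcal{D}},\tilde\theta_{T+k}).
\]
We will keep only the first-order term in $\eta$, which is the same regime in which the $\mathcal{O}(\Delta t^2)$ remainder is absorbed. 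In this regime the loss becomes $\|{-\eta}\, g(\tilde{\mathcal{D}},\theta_T)-c\,\Delta\theta\|^2/\|c\,\Delta\theta\|^2$, where $g$ is the accumulated gradient.

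The first step is to differentiate this expression in $\tilde{\mathcal{D}}$. By the chain rule, $\nabla_{\tilde{\mathcal{D}}}\mathcal{L}$ is a product of two factors. One factor is the residual $r(T)=\tilde\theta_{T+t}-\theta_{T+\Delta T}$ divided by the normalizer. The other is the Jacobian $\partial\tilde\theta_{T+t}/\partial\tilde{\mathcal{D}}=-\eta\sum_k\nabla^2_{\tilde{\mathcal{D}}\theta}\ell(\tilde{\mathcal{D}},\tilde\theta_{T+k})$, to leading order. The normalization by $M$, the maximal matching length, comes from averaging over the at most $M$ inner steps. This is also where the $1/(M\|\Delta\theta\|)$ prefactor will appear, after dividing by the denominator norm, which is proportional to $\|\Delta\theta\|$.

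The second step is to take $\mathcal{L}_1$ and $\mathcal{L}_2$ at starting points $T$ and $T+\Delta t$ and split the difference of gradients by the product rule into two pieces. The first piece is the change in the Jacobian times the residual. Here the Jacobian change is governed by how the mixed Hessian varies as $\theta$ moves by $\Delta t\,\Delta\theta$ along the interpolated line, and this variation is controlled through the Hessian-Lipschitz assumption $\|H_{t'}-H_t\|\le L_H|t'-t|$. Combined with the collinearity, it yields a contribution $\eta L_H\|\Delta\theta\|\Delta t$. The second piece is the Jacobian (bounded by $L_{\tilde{\mathcal{D}}\theta}$) times the change in the normalized residual. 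Because the trajectory is linear, the target shifts exactly by $\Delta t\,\Delta\theta$, and the student start shifts by the same amount. The residual change is therefore only the change of the gradient step, $\eta(H_{T+\Delta t}-H_T)$-type terms, plus the shift of the normalizer. This gives $\eta L_{\tilde{\mathcal{D}}\theta}\Delta t$ after dividing by $\|\Delta\theta\|$. Adding the two pieces and applying the common factor $1/(M\|\Delta\theta\|)$ gives $K\Delta t$. The bound $H_{\max}$ controls the Taylor remainders and cross terms, which are collected into $\mathcal{O}(\Delta t^2)$.

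The main obstacle is the bookkeeping that makes the constant come out exactly as $K$. Specifically, one must justify that the residual is $\mathcal{O}(\|\Delta\theta\|)$ relative to the normalizer so that it cancels, and that the nonlinear dependence of $\tilde\theta_{T+k}$ on $\tilde{\mathcal{D}}$ only enters at higher order. I would first try to work entirely in the one-step linearization, using $H_{\max}$ to bound the deviation of the inner iterates from $\theta_T$ by $\mathcal{O}(\eta)$. I would then state explicitly that terms of order $\eta\,\Delta t^2$ and $\eta^2$ are subsumed in $\mathcal{O}(\Delta t^2)$. This mirrors the level of rigor the proposition itself implies.
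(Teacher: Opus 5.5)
Your skeleton is the paper's. You take one linearized SGD step from a point of the straight line. The chain rule gives $\nabla_{\tilde{\mathcal D}}\mathcal L_i=-\frac{\eta}{M\|\Delta\theta\|}\hat v_i^\top G_i$ with $v_i=\eta g_i+M\Delta\theta$ and $G_i=\nabla^2_{\tilde{\mathcal D}\theta}\ell(\tilde{\mathcal D},\theta_i)$. A product-rule split then gives a piece driven by $G_{t+\Delta t}-G_t$ and a piece driven by $\hat v_2-\hat v_1$. The gap is in how you attach constants to these two pieces.

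\textbf{First piece.} It needs the rate $\dot G_t$ at which the \emph{mixed} Hessian changes along the line. For smooth $\ell$ this is $\nabla_{\tilde{\mathcal D}}H$ applied to $\Delta\theta$, so it measures how the $\theta\theta$-Hessian depends on $\tilde{\mathcal D}$. The condition $\|H_{t'}-H_t\|\le L_H|t'-t|$ only says how $H$ varies with $t$ at fixed $\tilde{\mathcal D}$. The norm bound of assumption 2 controls no derivative at all.

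This cannot be patched by bookkeeping, because the inequality is false under assumptions 1--2 as written. Take scalar $\theta$ and $\tilde{\mathcal D}$, let $\ell=\tfrac{s}{2}\theta^2+(\tilde{\mathcal D}-d_0)\,\omega^{-2}(1-\cos\omega\theta)$, evaluate at $\tilde{\mathcal D}=d_0$, and start at $\theta=0$. Then $H\equiv s$, so $L_H=0$, and $|G|=\omega^{-1}|\sin\omega\theta|\le\omega^{-1}=:L_{\tilde{\mathcal D}\theta}$. Yet $\hat v_2=\hat v_1$, and the gradient difference equals $\frac{\eta}{M}\Delta t+\mathcal O(\Delta t^3)$. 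This exceeds $K\Delta t=\frac{\eta\,\Delta t}{M\omega\|\Delta\theta\|}$ as soon as $\omega\|\Delta\theta\|>1$.

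The paper's proof has the same hole: it simply asserts $\|\dot G_t\|\le L_{\tilde{\mathcal D}\theta}\|\Delta\theta\|$. What you actually need is an explicit extra hypothesis such as $\|G_{t'}-G_t\|\le L_G\|\Delta\theta\|\,|t'-t|$. Your first piece is then at most $\frac{\eta}{M\|\Delta\theta\|}L_G\|\Delta\theta\|\Delta t$, which is the first term of $K$ with $L_G$ in place of $L_H$.

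\textbf{Second piece.} This is also mis-derived. On the uniform-speed line the start and the target move together, so $v_2-v_1=\eta(g_{t+\Delta t}-g_t)=\eta\,\Delta t\,H_t\Delta\theta+\mathcal O(\Delta t^2)$. Its leading term is bounded through $H_{\max}$, with $L_H$ entering only the remainder; this is the paper's Taylor step (i). It is not an ``$(H_{T+\Delta t}-H_T)$-type'' term, so you have the roles of $H_{\max}$ and $L_H$ interchanged.

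Using $\|\hat v_2-\hat v_1\|\le 2\|v_2-v_1\|/\|v_1\|$ and $\|v_1\|\approx M\|\Delta\theta\|$, this piece is at most roughly $\frac{\eta}{M\|\Delta\theta\|}L_{\tilde{\mathcal D}\theta}\cdot\frac{2\eta H_{\max}}{M}\Delta t$. It therefore carries an extra factor $\eta$. It fits under the $L_{\tilde{\mathcal D}\theta}$ term of $K$ only under a step-size condition of the form $\eta H_{\max}\le c_0M$, with $\eta\|g_t\|$ small compared with $M\|\Delta\theta\|$.

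Your device of absorbing $\eta^2$ terms into $\mathcal O(\Delta t^2)$ is not legitimate. The parameters $\eta$ and $\Delta t$ are independent, and $\eta^2\Delta t$ is linear in $\Delta t$. If you drop all $\eta^2$ terms anyway, this piece vanishes instead of producing $\eta L_{\tilde{\mathcal D}\theta}\Delta t$.

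\textbf{Smaller points.}
\begin{itemize}
\item The factor $1/M$ comes from the normalizer $\|\theta_T-\theta_{T+\Delta T}\|=M\|\Delta\theta\|$, not from averaging over inner steps. Your $t$-step Jacobian $-\eta\sum_{k<t}\nabla^2_{\tilde{\mathcal D}\theta}\ell$ is a sum that grows like $t$, which is why the paper takes a single step.
\item The squared loss of Eq.~(\ref{eq:MTTLoss}) adds a factor $2$ over the paper's unsquared ratio.
\item Your collinearity and equal-shift claims require the scalar uniform-speed line $\theta_t=\theta_0+t\Delta\theta$ that the paper assumes. They need not hold for the layer-wise vector $\beta_p(t)$ of Eq.~(\ref{eq:beta}).
\end{itemize}
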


This proposition states that the gradient difference of the interpolated trajectory converges linearly with $\Delta t$ , but the original trajectory does not have this guarantee. This demonstrates that our method can effectively control gradient differences and enhance the stability of the distillation process.

\footnotetext[1]{We have provided the drawing details for Figure~\ref{fig:Motivation} and Figure~\ref{fig:cos_sim} in Appendix~\ref{app:sec:fig_explain}.}

To further improve the stability of the distillation process, we also used an exponential moving average smoothing strategy \citep{hunter1986exponentially}. The distillation dataset $\tilde{\mathcal{D}}_p^i$ after the $i$-th iteration was smoothed and updated as follows.

\begin{equation}
\hat{\mathcal{D}}^i_p = \alpha \hat{\mathcal{D}}^{i-1}_p + (1 - \alpha) \tilde{\mathcal{D}}^i_p
\label{eq:ema}
\end{equation}

In summary, our procedure is shown in Algorithm~\ref{alg:ptm-st}.
\titlespacing*{\section}{0pt}{0.9ex plus .2ex minus .2ex}{0.45ex}
\titlespacing*{\subsection}{0pt}{0.8ex plus .2ex minus .2ex}{0.40ex}

\section{Experiment}
\subsection{Eexperimental Settings}

\paragraph{Dataset and task.}
We evaluate our method on two standard vision–language benchmarks: Flickr30K~\citep{flickr30k} and COCO~\citep{coco}, using the Karpathy split~\citep{karpathy2015deep} for training, validation, and testing. Flickr30k and COCO are image captioning datasets with 31K and 123K images respectively, and each image is paired with five captions. The model performance is commonly measured by the recall of top-K retrieval (R@K): given a query from one modality, we retrieve the closest k matches from the other modality and measure the correctness. We denote the text-to-image retrieval as IR@K and image-to-text retrieval as TR@K.

\paragraph{Baseline method.}
We compare our method against several baselines. For coreset selection, we include random subset selection (\textit{Random}), Herd~\citep{Herd}, K-center~\citep{K_CENT}, and the forgetting strategy (\textit{Forgetting})~\citep{Forgetting}. For dataset distillation, we consider MTT-VL~\cite{MTT-VL}, LoRS~\cite{LoRS} and EDGM~\citep{EDGE}. A detailed description of these methods can be found in the Appendix~\ref{app:sec:related_work}.

\paragraph{Implementation Details.}
We evaluate data-pair counts $N \in \{100, 200, 500\}$ with an EMA decay of 0.99. All experiments run on a single 3090 GPU, highlighting the method's efficiency and low memory use. For details regarding time and memory overhead, please refer to Appendix~\ref{sec:runtime_memory}.

Following prior work \citep{LoRS}, we mainly employ a trainable Normalizer-Free ResNet (NFNet)~\citep{nfnet} as the image encoder with ImageNet-pretrained weights \citep{image-net}. For the text encoder, we use a pretrained and frozen BERT~\citep{bert}, followed by a trainable projection head. We also tried other encoders such as RegNet~\citep{radosavovic2020designing} and DistilBERT~\citep{sanh2019distilbert} in Appendix~\ref{app:sec:Generalization}.
For additional training details and more analysis, please refer to Appendix~\ref{app:sec:experiment}.

\subsection{Main Results}\label{sec:main_results}
The main results are summarized in Table~\ref{table:flickr}, Table~\ref{table:coco} and Table~\ref{table:cc3m}.
Experimental results demonstrate that our PTM-ST method consistently yields significant gains across all metrics, as shown in our empirical evaluations below. Under equal compression ratios, the multi-stage distillation strategy outperforms conventional single-stage baselines, confirming the effectiveness and robustness of the phase-aware mechanism in MDD.
This advantage grows with the number of synthetic samples, indicating that decomposing distillation into semantically aligned phases allows each subset to capture the teacher model’s evolving dynamics at different training stages. Scaling experiments to 1,000 distilled pairs further validates the scalability and efficacy of our approach. Comparative analysis of core-set selection methods shows negligible or worse performance relative to random sampling, underscoring their limitations in modeling complex cross-modal dynamics, under our standard metrics and settings.
In contrast, our progressive framework explicitly transfers multi-phase knowledge from early and late teacher stages, significantly enhancing the distilled dataset’s representational capacity. Notably, on Flickr30K, PTM-ST achieves 76\% of full training performance using only 1.7\% of the original data, highlighting its potential to improve compression efficiency in multimodal distillation.
In addition, we extend our method to the larger dataset LLaVA-cc3m~\citep{LLaVA}. This dataset is prepared for LLaVA pre-training and contains 595k image-text pairs. We split it into training, validation, and test sets in a 3:1:1 ratio.
We also explore more powerful DiNo-v2~\citep{dinov2} as the image encoder and BGE-1.5~\citep{bge} as the text encoder in Appendix~\ref{app:sec:dinov2}. The results shown in Table~\ref{table:cc3m} and Table~\ref{app:tab:dinov2} reveal that our method remains effective with scaled training data size and model capacity, and it consistently outperforms the previous SOTA.

\renewcommand{\arraystretch}{0.8}
\begin{table}[t]
\centering
\caption{Results on Flickr30k. The metrics for training model on the full dataset are IR@1=27.3, IR@5=57.1, IR@10=69.7; TR@1=33.9, TR@5=65.1, TR@10=75.2.}
\label{table:flickr}
\resizebox{\columnwidth}{!}{
\begin{tabular}{ccc|cccc|ccccc}
\toprule
\multirow{2}{*}{\textbf{Pairs}} & \multirow{2}{*}{\textbf{Ratio}} & \multirow{2}{*}{\textbf{Metric}} & \multicolumn{4}{c|}{\textbf{Core-set Selection}} & \multicolumn{5}{c}{\textbf{Dataset Distillation}} \\
\cmidrule(lr){4-12}
 & & & RAND & HERD & K-CENT & FORGET & \textbf{MTT-VL}\footnotemark[2] & \textbf{LoRS}\footnotemark[3] & \textbf{EDGE}\footnotemark[2] & \textbf{PTM-ST} & \ensuremath{\triangle} \\
\midrule

\multirow{6}{*}{\shortstack{100}} & \multirow{6}{*}{0.3\%}
& IR@1  & 1.0 & 0.7 & 0.7 & 0.7 & 4.7{\scriptsize$\pm$0.2} & 7.8{\scriptsize$\pm$0.2} & - & \textbf{9.6{\scriptsize$\pm$0.3}} & \gup{1.8} \\
& & IR@5  & 4.0 & 2.8 & 3.1 & 2.4 & 15.7{\scriptsize$\pm$0.5} & 24.4{\scriptsize$\pm$0.3} & - & \textbf{28.4{\scriptsize$\pm$0.5}} & \gup{4.0} \\
& & IR@10 & 6.5 & 5.3 & 6.1 & 5.6 & 24.6{\scriptsize$\pm$1.0} & 35.5{\scriptsize$\pm$0.4} & - & \textbf{41.5{\scriptsize$\pm$0.5}} & \gup{6.0} \\
& & TR@1  & 1.3 & 1.1 & 0.6 & 1.2 & 9.9{\scriptsize$\pm$0.3} & 10.2{\scriptsize$\pm$0.3} & - & \textbf{14.4{\scriptsize$\pm$0.4}} & \gup{4.2} \\
& & TR@5  & 5.9 & 4.7 & 4.2 & 4.2 & 28.3{\scriptsize$\pm$0.5} & 30.9{\scriptsize$\pm$0.8} & - & \textbf{38.6{\scriptsize$\pm$0.8}} & \gup{7.7} \\
& & TR@10 & 10.1 & 7.9 & 7.6 & 9.7 & 39.1{\scriptsize$\pm$0.7} & 44.9{\scriptsize$\pm$0.6} & - & \textbf{52.7{\scriptsize$\pm$0.7}} & \gup{7.8} \\
\midrule

\multirow{6}{*}{\shortstack{200}} & \multirow{6}{*}{0.7\%}
& IR@1  & 1.1 & 1.5 & 1.5 & 1.2 & 4.6{\scriptsize$\pm$0.9} & 10.8{\scriptsize$\pm$0.5} & - & \textbf{12.5{\scriptsize$\pm$0.1}} & \gup{1.7} \\
& & IR@5  & 4.8 & 5.5 & 5.4 & 3.1 & 16.0{\scriptsize$\pm$1.6} & 29.8{\scriptsize$\pm$0.4} & - & \textbf{34.7{\scriptsize$\pm$0.2}} & \gup{4.9} \\
& & IR@10 & 9.2 & 9.3 & 9.9 & 8.4 & 25.5{\scriptsize$\pm$2.6} & 40.0{\scriptsize$\pm$0.7} & - & \textbf{48.5{\scriptsize$\pm$0.2}} & \gup{8.5} \\
& & TR@1  & 2.1 & 2.3 & 2.2 & 1.5 & 10.2{\scriptsize$\pm$0.8} & 14.1{\scriptsize$\pm$0.5} & - & \textbf{19.3{\scriptsize$\pm$0.2}} & \gup{5.2} \\
& & TR@5  & 8.7 & 8.4 & 8.2 & 8.4 & 28.7{\scriptsize$\pm$1.0} & 36.1{\scriptsize$\pm$0.4} & - & \textbf{45.9{\scriptsize$\pm$0.4}} & \gup{9.8} \\
& & TR@10 & 13.2 & 14.4 & 13.5 & 10.2 & 41.9{\scriptsize$\pm$1.9} & 50.0{\scriptsize$\pm$0.3} & - & \textbf{58.9{\scriptsize$\pm$0.5}} & \gup{8.9} \\
\midrule

\multirow{6}{*}{\shortstack{500}} & \multirow{6}{*}{1.7\%}
& IR@1  & 2.4 & 3.0 & 3.5 & 1.8 & 6.6{\scriptsize$\pm$0.3} & 12.7{\scriptsize$\pm$0.4} & 6.7 & \textbf{16.0{\scriptsize$\pm$0.2}} & \gup{3.3} \\
& & IR@5  & 10.5 & 10.0 & 10.4 & 9.0 & 20.2{\scriptsize$\pm$1.2} & 32.9{\scriptsize$\pm$0.3} & 21.0 & \textbf{40.5{\scriptsize$\pm$0.3}} & \gup{7.6} \\
& & IR@10 & 17.4 & 17.0 & 17.3 & 15.9 & 30.0{\scriptsize$\pm$2.1} & 44.9{\scriptsize$\pm$0.8} & 30.5 & \textbf{54.0{\scriptsize$\pm$0.3}} & \gup{9.1} \\
& & TR@1  & 5.2 & 5.1 & 4.6 & 3.6 & 13.3{\scriptsize$\pm$0.6} & 14.7{\scriptsize$\pm$0.6} & 13.3 & \textbf{22.2{\scriptsize$\pm$0.6}} & \gup{7.5} \\
& & TR@5  & 18.3 & 16.4 & 16.4 & 12.3 & 32.8{\scriptsize$\pm$1.8} & 37.6{\scriptsize$\pm$0.7} & 35.6 & \textbf{51.1{\scriptsize$\pm$0.3}} & \gup{14.0} \\
& & TR@10 & 25.7 & 24.3 & 23.3 & 19.3 & 46.8{\scriptsize$\pm$0.8} & 51.1{\scriptsize$\pm$0.4} & 47.5 & \textbf{64.6{\scriptsize$\pm$0.3}} & \gup{13.5} \\

\bottomrule
\end{tabular}
}
\end{table}

\begin{table}[t]
\centering
\caption{Results on the COCO dataset. The metrics for training model on the full dataset are IR@1=16.9, IR@5=41.9, IR@10=55.9; TR@1=19.6, TR@5=45.6, TR@10=59.5.}
\label{table:coco}
\resizebox{\columnwidth}{!}{
\begin{tabular}{ccc|cccc|ccccc}
\toprule
\multirow{2}{*}{\textbf{Pairs}} & \multirow{2}{*}{\textbf{Ratio}} & \multirow{2}{*}{\textbf{Metric}} & \multicolumn{4}{c|}{\textbf{Core-set Selection}} & \multicolumn{5}{c}{\textbf{Dataset Distillation}} \\
\cmidrule(lr){4-12}
 & & & RAND & HERD & K-CENT & FORGET & \textbf{MTT-VL}\footnotemark[2] & \textbf{LoRS}\footnotemark[3] & \textbf{EDGE}\footnotemark[2] & \textbf{PTM-ST} & \ensuremath{\triangle} \\
\midrule

\multirow{6}{*}{\shortstack{100}} & \multirow{6}{*}{0.8\textperthousand}
&   IR@1  & 0.3 & 0.5 & 0.4 & 0.3 & 1.3{\scriptsize$\pm$0.1} & 1.7{\scriptsize$\pm$0.1} & - & \textbf{2.3{\scriptsize$\pm$0.1}} & \gup{0.6} \\
& & IR@5  & 1.3 & 1.4 & 1.4 & 1.5 & 5.4{\scriptsize$\pm$0.3} & 6.9{\scriptsize$\pm$0.2} & - & \textbf{9.0{\scriptsize$\pm$0.1}} & \gup{2.1} \\
& & IR@10 & 2.7 & 3.5 & 2.5 & 2.5 & 9.5{\scriptsize$\pm$0.5} & 11.9{\scriptsize$\pm$0.3} & - & \textbf{15.6{\scriptsize$\pm$0.3}} & \gup{3.7} \\
& & TR@1  & 0.8 & 0.8 & 1.4 & 0.7 & 2.5{\scriptsize$\pm$0.3} & 3.0{\scriptsize$\pm$0.3} & - & \textbf{4.1{\scriptsize$\pm$0.2}} & \gup{1.1} \\
& & TR@5  & 3.0 & 2.1 & 3.7 & 2.6 & 10.0{\scriptsize$\pm$0.5} & 11.0{\scriptsize$\pm$0.2} & - & \textbf{13.4{\scriptsize$\pm$0.2}} & \gup{2.4} \\
& & TR@10 & 5.0 & 4.9 & 5.5 & 4.8 & 15.7{\scriptsize$\pm$0.4} & 18.5{\scriptsize$\pm$0.3} & - & \textbf{22.0{\scriptsize$\pm$0.2}} & \gup{3.5} \\
\midrule

\multirow{6}{*}{\shortstack{200}} & \multirow{6}{*}{1.7\textperthousand}
&   IR@1  & 0.6 & 0.9 & 0.7 & 0.6 & 1.7{\scriptsize$\pm$0.1} & 2.2{\scriptsize$\pm$0.2} & - & \textbf{3.9{\scriptsize$\pm$0.1}} & \gup{1.7} \\
& & IR@5  & 2.3 & 2.4 & 2.1 & 2.8 & 6.5{\scriptsize$\pm$0.4} & 8.6{\scriptsize$\pm$0.1} & - & \textbf{13.7{\scriptsize$\pm$0.2}} & \gup{5.1} \\
& & IR@10 & 4.4 & 4.1 & 5.8 & 4.9 & 12.3{\scriptsize$\pm$0.8} & 14.7{\scriptsize$\pm$0.2} & - & \textbf{22.2{\scriptsize$\pm$0.2}} & \gup{7.5} \\
& & TR@1  & 1.0 & 1.0 & 1.2 & 1.1 & 3.3{\scriptsize$\pm$0.2} & 3.6{\scriptsize$\pm$0.3} & - & \textbf{5.7{\scriptsize$\pm$0.2}} & \gup{2.1} \\
& & TR@5  & 4.0 & 3.6 & 3.8 & 3.5 & 11.9{\scriptsize$\pm$0.6} & 12.1{\scriptsize$\pm$0.2} & - & \textbf{18.2{\scriptsize$\pm$0.3}} & \gup{6.1} \\
& & TR@10 & 7.2 & 7.7 & 7.5 & 7.0 & 19.4{\scriptsize$\pm$1.2} & 20.8{\scriptsize$\pm$0.4} & - & \textbf{27.8{\scriptsize$\pm$0.3}} & \gup{7.0} \\
\midrule

\multirow{6}{*}{\shortstack{500}} & \multirow{6}{*}{4.4\textperthousand}
&   IR@1  & 1.1 & 1.7 & 1.1 & 0.8 & 2.5{\scriptsize$\pm$0.5} & 3.3{\scriptsize$\pm$0.2} & 1.8 & \textbf{6.6{\scriptsize$\pm$0.1}} & \gup{3.3} \\
& & IR@5  & 5.0 & 5.3 & 6.3 & 5.8 & 8.9{\scriptsize$\pm$0.7} & 11.8{\scriptsize$\pm$0.4} & 6.5 & \textbf{20.5{\scriptsize$\pm$0.2}} & \gup{8.7} \\
& & IR@10 & 8.7 & 9.9 & 10.5 & 8.2 & 15.8{\scriptsize$\pm$1.5} & 19.2{\scriptsize$\pm$0.6} & 11.2 & \textbf{30.7{\scriptsize$\pm$0.2}} & \gup{11.5} \\
& & TR@1  & 3.2 & 3.8 & 3.7 & 2.3 & 5.0{\scriptsize$\pm$0.4} & 4.1{\scriptsize$\pm$0.5} & 2.9 & \textbf{6.9{\scriptsize$\pm$0.3}} & \gup{1.9} \\
& & TR@5  & 7.5 & 7.8 & 8.7 & 8.2 & 17.2{\scriptsize$\pm$1.3} & 12.8{\scriptsize$\pm$0.4} & 9.5 & \textbf{20.1{\scriptsize$\pm$0.2}} & \gup{2.9} \\
& & TR@10 & 12.5 & 13.7 & 13.8 & 13.0 & 26.0{\scriptsize$\pm$1.9} & 20.2{\scriptsize$\pm$0.9} & 15.7 & \textbf{30.0{\scriptsize$\pm$0.3}} & \gup{4.0} \\

\bottomrule
\end{tabular}
}
\end{table}

\subsection{Ablation Study}
This section presents an ablation study on the three core components of our framework: Progressive Teacher Modeling (PTM), Shortcut Trajectory (ST), and Exponential Moving Average (EMA). Experiments conducted on 500 distilled pairs from Flickr30K and COCO are summarized in Table~\ref{tab:ablation}. The results demonstrate that each component contributes positively to the overall performance, validating their individual effectiveness in enhancing distillation quality.

\footnotetext[2]{Intercepted from the original paper.}
\footnotetext[3]{Reproduced by ourselves.}

\begin{table}[t]
\caption{Results on LLaVA-cc3m dataset. The metrics for training model on the full dataset are IR@1=9.3, IR@5=25.9, IR@10=36.5; TR@1=9.8, TR@5=26.4, TR@10=37.3.}
\label{table:cc3m}
\centering
\resizebox{\columnwidth}{!}{
\begin{tabular}{c|cccccc|cccccc}
\toprule
\multirow{2}{*}{Pairs} 
  & \multicolumn{6}{c|}{LoRS} 
  & \multicolumn{6}{c}{Ours} \\
\cmidrule(lr){2-7}\cmidrule(lr){8-13}
& IR@1 & IR@5 & IR@10 & TR@1 & TR@5 & TR@10
& IR@1 & IR@5 & IR@10 & TR@1 & TR@5 & TR@10 \\
\midrule
100 & 1.2 & 4.6 &  7.7 & 1.7 &  6.9 & 11.4
    & 2.3 & 8.2 & 13.2 & 2.9 & 10.0 & 15.9 \\
200 & 1.4 & 5.3 &  8.7 & 2.4 &  8.5 & 13.6
    & 2.7 & 9.7 & 15.8 & 3.7 & 11.9 & 18.4 \\
500 & 1.7 & 6.2 & 10.1 & 2.5 &  8.7 & 13.8
    & 3.3 & 11.4 & 17.9 & 4.1 & 13.2 & 19.9 \\
\bottomrule
\end{tabular}
}
\end{table}

\begin{table}[t]
\caption{Various ablation studies with 500 pairs on Flickr30k and COCO.}
\centering
\resizebox{\columnwidth}{!}{
\setlength{\extrarowheight}{2pt}
\begin{tabular}{cl|ccc|ccc|ccc|ccc}
\toprule
\multirow{2}{*}{\textbf{No.}} & \multirow{2}{*}{\textbf{Model}} & 
\multicolumn{3}{c|}{\textbf{Flickr IR@K}} & 
\multicolumn{3}{c|}{\textbf{Flickr TR@K}} &
\multicolumn{3}{c|}{\textbf{COCO IR@K}} &
\multicolumn{3}{c}{\textbf{COCO TR@K}} \\
& & 1 & 5 & 10 & 1 & 5 & 10 & 1 & 5 & 10 & 1 & 5 & 10 \\
\midrule
(1) & \textsc{BASE (N/A)}    & 12.2 & 33.0 & 45.7 & 16.2 & 39.4 & 54.0 & 3.4 & 11.7 & 19.1 & 4.2 & 13.8 & 20.8 \\
(2) & \textsc{EMA}           & 12.9 & 33.7 & 46.3 & 16.2 & 40.6 & 54.3 & 3.5 & 12.3 & 19.8 & 4.0 & 13.6 & 20.8 \\
(3) & \textsc{PTM}           & 13.4 & 35.2 & 48.1 & 19.6 & 43.2 & 55.5 & 4.2 & 14.2 & 22.6 & 4.8 & 15.1 & 23.4 \\
(4) & \textsc{ST}            & 14.2 & 37.8 & 50.8 & 19.5 & 45.1 & 59.3 & 4.9 & 15.9 & 24.7 & 4.8 & 15.1 & 23.8 \\
(5) & \textsc{PTM + EMA}     & 14.3 & 37.7 & 50.7 & 18.8 & 45.0 & 58.5 & 4.5 & 14.8 & 23.1 & 4.8 & 15.7 & 24.3 \\
(6) & \textsc{ST + EMA}      & 15.4 & 38.3 & 51.4 & 19.7 & 46.7 & 60.3 & 5.2 & 17.0 & 26.3 & 5.0 & 15.8 & 24.0 \\
(7) & \textsc{PTM + ST}      & 15.4 & 38.8 & 52.2 & 22.3 & 50.5 & 64.6 & 6.3 & 20.1 & 30.2 & 6.2 & 19.9 & 29.8 \\
\rowcolor{gray!20}
(8) & \textsc{OURS(PTM + ST + EMA)} & \textbf{15.5} & \textbf{39.6} & \textbf{53.6} & \textbf{22.9} & \textbf{51.6} & \textbf{64.9} & \textbf{6.6} & \textbf{20.5} & \textbf{30.7} & \textbf{6.9} & \textbf{20.1} & \textbf{30.0} \\
\bottomrule
\end{tabular}
}
\label{tab:ablation}
\end{table}

\begin{table}[htbp]
\centering
\caption{Performance on additional downstream tasks.}
\label{tab:vqa_cls}
\begin{tabular}{l|ccc|ccc}
\toprule
\multirow{2}{*}{Methods} &
\multicolumn{3}{c|}{VQA on COCO-QA} &
\multicolumn{3}{c}{ImageNet-50 Classification} \\
\cmidrule(lr){2-4}\cmidrule(lr){5-7}
& ACC@1 & ACC@5 & ACC@10 & ACC@1 & ACC@5 & ACC@10 \\
\midrule
LoRS    & 10.8 & 33.0 & 39.9 & 18.6 & 42.1 & 55.9 \\
\rowcolor{gray!20}
Ours & \textbf{16.3} & \textbf{38.9} & \textbf{47.4} &
\textbf{22.1} & \textbf{52.7} & \textbf{67.9} \\
\bottomrule
\end{tabular}
\end{table}

Among the modules, Shortcut Trajectory (ST) provides the largest improvement by stabilizing and guiding the teacher model’s optimization path, which is crucial for reliable and representative knowledge transfer. PTM further boosts performance by dividing the teacher’s training into phases to better align semantic information across stages. EMA helps smooth training dynamics and improves robustness in multimodal settings. Together, these components work synergistically to achieve the best distillation results. Quantitatively, Table~\ref{tab:ablation} shows monotonic gains from EMA to PTM to ST, with PTM+ST contributing most of the improvement. Adding EMA on top yields small but consistent benefits, giving the best IR@K/TR@K on both datasets.

\subsection{Generalization to VQA and Zero-shot Classification}
\label{sec:vqa_cls}

To more comprehensively evaluate the generalization capability of our distillation method, we further conducted experiments on Visual Question Answering (COCO-QA)~\citep{coco-qa} and ImageNet classification~\citep{image-net}. Specifically, in COCO-QA, we enumerate all possible answers and construct the template ``Question: \{question\} Answer: \{answer\}.'' for each question–answer pair, and then compute the Top-K retrieval performance. For ImageNet, we randomly sample 50 categories from ImageNet-1K and report the Top-K zero-shot classification accuracies.

In these comparisons, teacher denotes the model trained on the full COCO dataset and evaluated directly in a zero-shot manner on both tasks. In contrast, LoRS and Ours are trained under the same distillation budget, using only 499 COCO image–text pairs. As summarized in Table~\ref{tab:vqa_cls}, under identical supervision budgets, our method significantly outperforms LoRS and further narrows the performance gap relative to the full-data teacher model.

\subsection{Distillation for More Pairs}

\begin{table}[htbp]
\caption{Performance comparison on Flickr30k and COCO with 1000 pairs.}
\label{tab:dataset_1000}
\centering
\scalebox{1}{
\setlength{\extrarowheight}{2pt}
\begin{tabular}{llcccccc}
\toprule
\textbf{Dataset} & \textbf{Method} & IR@1 & IR@5 & IR@10 & TR@1 & TR@5 & TR@10 \\
\midrule
\multirow{3}{*}{Flickr30k} 
& LoRS  & 9.8  & 29.0 & 41.6 & 14.9 & 39.8 & 53.5 \\
& EDGE  & 9.9  & 28.2 & 40.5 & 14.5 & 38.3 & 51.7 \\
& \cellcolor{gray!20}Ours  & \cellcolor{gray!20}\textbf{17.5} & \cellcolor{gray!20}\textbf{42.8} & \cellcolor{gray!20}\textbf{56.3} & \cellcolor{gray!20}\textbf{23.8} & \cellcolor{gray!20}\textbf{53.6} & \cellcolor{gray!20}\textbf{67.2} \\
\midrule
\multirow{3}{*}{COCO} 
& LoRS  & 2.5  & 9.9  & 16.6 & 4.2  & 15.2 & 24.1 \\
& EDGE  & 2.8  & 9.8  & 16.2 & 3.9  & 13.0 & 21.0 \\
& \cellcolor{gray!20}Ours  & \cellcolor{gray!20}\textbf{7.0}  & \cellcolor{gray!20}\textbf{21.8} & \cellcolor{gray!20}\textbf{32.3} & \cellcolor{gray!20}\textbf{6.8}  & \cellcolor{gray!20}\textbf{20.9} & \cellcolor{gray!20}\textbf{30.8} \\
\bottomrule
\end{tabular}
}
\end{table}

To further validate the scalability of our approach, we extend the distillation process to 1,000 synthetic data points distributed across three distinct training phases (details in Appendix~\ref{app:sec:experiment}). Table~\ref{tab:dataset_1000} presents a comparative analysis between our proposed method and the baseline LoRS. While conventional methods such as LoRS tend to saturate or even experience performance degradation as the number of distilled samples increases—evidenced by the declining IR@K scores—our phased distillation strategy consistently yields performance improvements. This demonstrates the effectiveness of our approach in capturing the evolving knowledge of the teacher model across different training stages, thereby enhancing the quality of the distilled dataset even at larger scales.

\subsection{Visualization}

\begin{figure}[htbp]
    \centering
    \includegraphics[width=0.9\textwidth]{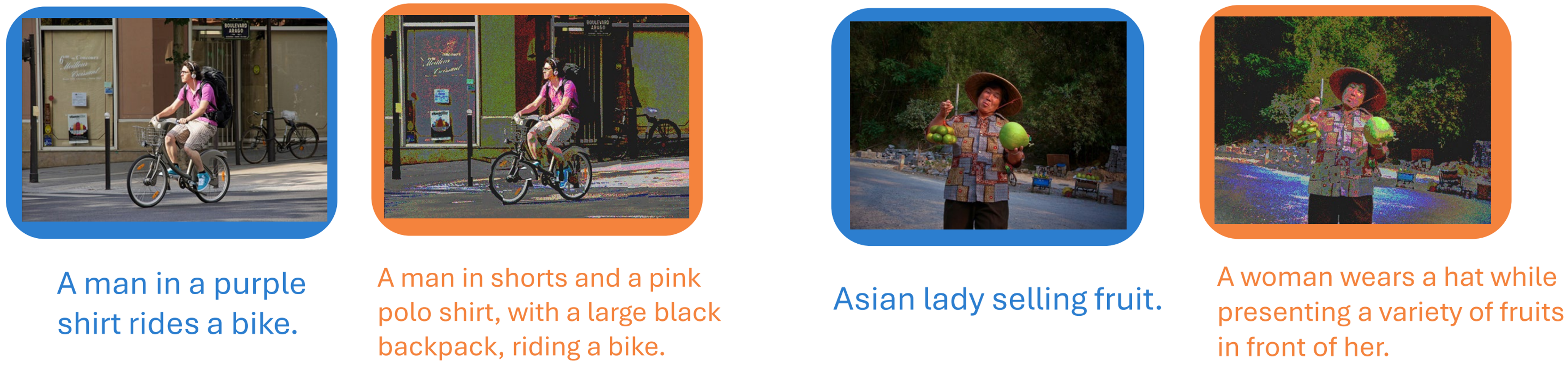}
    \caption{Examples of initial (left) and synthetic (right) image-text pairs.}
    \label{fig:visual_analysis}
\end{figure}

We visualize the images and texts of the synthetic pairs from Flickr30k to illustrate the distilled data. Figure \ref{fig:visual_analysis} presents the images and texts before (initial) and after distillation. The distilled images exhibit a DeepDream-like style \citep{DeepDream}, a phenomenon commonly observed in dataset distillation, often accompanied by repetitive textures, exaggerated details, and hallucinatory patterns. The distilled texts, in contrast, are richer and more informative. Other examples are in Appendix \ref{app:sec:visualization}.

\section{Conclusion}
In this study, we propose PTM-ST, a phased distillation framework that addresses a key limitation in multimodal dataset distillation (MDD): the inability of conventional methods to capture and transfer the complex, evolving knowledge from teacher models during later training stages. It introduces Phased Teacher Modeling for stage-aware guidance and a Shortcut Trajectory for stabilizing teacher evolution. Experiments on Flickr30K and MS-COCO show that PTM-ST outperforms existing methods in both performance and robustness.

\paragraph{Future Work.} PTM-ST requires manual specification of interpolation endpoints and matching ranges for each stage, which may result in excessive parameterisation and difficulty in tuning. Future research could focus on how to adaptively adjust these parameters as the distillation process progresses, transforming it into an automated, continuous procedure. Furthermore, to provide new perspectives, future work could explore distillation methods not based on MTT (see Appendix~\ref{app:sec:NCFM}).

\section*{Acknowledgement}
This work was supported by the Guangdong Basic and Applied Basic Research Foundation (2025A1515011546) and by the Shenzhen Science and Technology Program
(JCYJ20240813105901003, KJZD20240903102901003, ZDCY20250901113000001).

\newpage
\section*{Reproducibility Statement}

We describe the methodology we applied in detail in Section~\ref{sec:method}, and provide parameter information in Appendix~\ref{app:sec:experiment}. The dataset Flickr30k was downloaded from \href{https://shannon.cs.illinois.edu/DenotationGraph/data/index.html}{here}, while COCO was obtained from \href{https://cocodataset.org/}{here}. Our code is primarily built upon the foundations laid by previous work \citep{LoRS, MTT}. We are currently organising the code and will make it publicly available in the near future.
\bibliography{iclr2026_conference}
\bibliographystyle{iclr2026_conference}
\newpage

\appendix
\section{Related Work Details}\label{app:sec:related_work}
\paragraph{Multimodal learning.}
Multimodal Learning (ML) has demonstrated superior representational power and holds substantial promise for a wide range of applications~\citep{jia2021scaling, chen2023revisiting, li2025perception, lai2024step}, owing to its ability to integrate complementary information from diverse modalities. 
Recent works have further extended this capability to reasoning segmentation~\citep{lai2024lisa, yang2023lisa++} and open-vocabulary perception~\citep{wang2025declip, wang2025generalized}.
Existing multimodal learning approaches predominantly seek to project inputs into a unified feature space, thereby facilitating the alignment of matched samples while distinguishing unmatched ones~\citep{shao2024explore, cui2023generalized}. 
However, compared to unimodal learning tasks such as semantic segmentation and object detection~\citep{tian2020prior, tian2022generalized, cui2022reslt, peng2023hierarchical}, multimodal data is not only larger in scale and richer in information, but also significantly more complex~\citep{leim3}. 
This increased complexity often compromises data quality and continuity, thereby exacerbating the challenges associated with training, such as object hallucinations~\citep{peng2025mitigating}. 
Consequently, there is an urgent need to explore more efficient learning strategies~\citep{yang2025visionzip, tian2022adaptive, yang2025visionthink} that are better tailored to the unique characteristics of multimodal scenarios~\citep{zolfaghari2021crossclr, lin2022multimodal, yang2024unified}.

\paragraph{Image-Text Contrastive Learning.}
In multimodal data set distillation~\citep{MTT-VL}, the basic task is image-text contrast learning.
Let the image-text pair dataset be $\mathcal{D} = (\mathcal{X}, \mathcal{Y})$, which contains $m$ pairs of images $\mathcal{X} = \{x_i\}_m$ and text $\mathcal{Y} = \{y_i\}_m$. Image-Text Contrastive models (e.g. CLIP) include an image encoder $f_V$ and a text encoder $f_T$, which encode image and text into one-dimensional vector representations, respectively, i.e. $u_i = f_V(x_i)$, $v_i = f_T(y_i)$. The model performs cross-modal retrieval using cosine similarity $\hat{s}_{ij} = \cos(u_i, v_j)$. During training, the InfoNCE loss \citep{oord2018representation} is typically used:

\begin{equation}
\begin{aligned}
L_{\text{NCE}}(\mathcal{B}) = - \frac{1}{b} \sum_{i=1}^{b} \log(P^{V}_{ii}) + \log(P^{T}_{ii}), \quad
P^{V}_{ij} = \frac{\exp(\hat{s}_{ij}/\tau_{c})}{\sum_k \exp(\hat{s}_{ik}/\tau_{c})}, \quad
P^{T}_{ij} = \frac{\exp(\hat{s}_{ij}/\tau_{c})}{\sum_k \exp(\hat{s}_{kj}/\tau_{c})},
\end{aligned}
\label{app:eq:InfoNCE}
\end{equation}

\noindent where $\mathcal{B} \subset \mathcal{D}$ is a small batch of data with a size of $b$. $P^V_{ij}$ and $P^T_{ij}$ are the softmax probabilities of similarity $\hat{s}_{ij}$, respectively, and $\tau_c$ is the temperature factor. InfoNCE assumes that the paired text $y_i$ for each image $x_i$ is a positive sample, while other texts $y_k, k \ne i$ are negative samples, thereby achieving contrastive alignment.

\paragraph{Match Training Trajectory.} The MTT~\citep{MTT} method guides the optimization of the synthetic dataset $\tilde{\mathcal{D}}$ by training two separate models on $\tilde{\mathcal{D}}$ and the real dataset $\mathcal{D}$, respectively, and matching their weight trajectories of different lengths—namely, a trajectory of length $t$ on $\tilde{\mathcal{D}}$ and a trajectory of length $\Delta T$ on $\mathcal{D}$.

\paragraph{Low-Rank Similarity Mining.}
The recent method LoRS~\citep{LoRS} further improves the expressive power of the distillation dataset by introducing a learnable similarity matrix. And use wBCE loss when training student models:

\begin{align}
L_{\text{wBCE}}(\tilde{\mathcal{B}}, \tilde{S}) = \frac{1}{|\{s_{ij} > \beta \}|} \sum_{i,j : s_{ij} > \beta} \ell\bigl(s_{ij}, \sigma(\hat{s}_{ij} / \tau)\bigr)  \nonumber + \frac{1}{|\{s_{ij} \leq \beta \}|} \sum_{i,j : s_{ij} \leq \beta} \ell\bigl(s_{ij}, \sigma(\hat{s}_{ij} / \tau)\bigr),
\label{eq:LoRSLoss}
\end{align}

\noindent where $\tilde{\mathcal{B}} \subset \tilde{\mathcal{D}}$ is a small batch of data in synthetic dataset, $\beta$ is the positive/negative threshold, set to 0.5, $\sigma$ is the sigmoid function, $s_{ij}$ is the learnable similarity between the $i$-th image and the $j$-th text in $\tilde{\mathcal{D}}$, and $\ell(y,p) = -y\log(p) - (1-y)\log(1-p)$.

They perform low-rank decomposition on the similarity matrix, i.e., \(\tilde{S} = \omega I + \frac{a}{r} L R^\top\), where $w, L_{N \times r}, R_{N \times r}$ are learnable parameters, and $a, r$ are hyperparameters.
The number of parameters saved by using low-rank decomposition is very small, and our method only uses the complete similarity matrix. As shown in Table~\ref{app:tab:distill_hyper} (sim\_type=full).

\paragraph{Efficient Multimodal Dataset Distillation via GEnerative Models.} \cite{EDGE} proposes a generative multimodal dataset distillation framework called EDGE, which efficiently replaces the original large-scale graphic dataset with a very small number of generated samples while maintaining the retrieval performance by training a graphic generation model and introducing a bi-directional contrast loss, diversity loss and caption synthesis strategy.

Although this form of supervision primarily aims to capture cross-modal relationships, it inadvertently introduces additional information such as intra-modal similarities. The inconsistency of these signals between the early and late stages of training significantly increases the training difficulty. Moreover, existing methods like LoRS only capture early training dynamics, synthesizing samples using weights from the first two epochs. While the teacher model continues to improve with further training, the student model rapidly collapses during the distillation process.
\section{Motivation}\label{app:sec:motivation}

Section~\ref{app:sec:moti_details} primarily introduces the motivation behind our entire project and the process of exploring methods. Section~\ref{app:sec:fig_explain} introduces the specific settings of the visualisation experiment.

\subsection{Motivation Details}\label{app:sec:moti_details}

Our overall approach stems from an important observation: LoRS only utilises the teacher model from the first 20–30\% of the training phase during the distillation of synthetic datasets. This differs significantly from traditional single-modal dataset distillation methods, which typically rely on teacher information from the entire training process. However, when we attempted to directly introduce the teacher model from the later training stages into the distillation process, we observed severe instability or even collapse in the training process of the student model, as shown in Table~\ref{tab:teacher_student}. This phenomenon prompted us to delve deeper into the question: why does the introduction of the later teacher model lead to such severe performance degradation in multimodal tasks?

\begin{table}[htbp]
\centering
\caption{Performance of teacher and student models at different epochs on COCO and Flickr30K.}
\label{tab:teacher_student}
\renewcommand{\arraystretch}{1}
\begin{tabular}{ll|ccccc}
\toprule
\textbf{Dataset} & \textbf{Model} & \textbf{2} & \textbf{4} & \textbf{6} & \textbf{8} & \textbf{10} \\
\midrule
\multirow{2}{*}{COCO} 
  & Teacher & 35.4 & 39.0 & 40.4 & 41.9 & 42.8 \\
  & Student & 11.4 & 7.9  & 6.9  & 5.7  & 5.1  \\
\midrule
\multirow{2}{*}{Flickr30K} 
  & Teacher & 45.6 & 48.9 & 51.2 & 53.6 & 55.3 \\
  & Student & 33.8 & 35.1 & 33.5 & 31.4 & 30.1 \\
\bottomrule
\end{tabular}
\renewcommand{\arraystretch}{1}
\end{table}

\paragraph{DATM Experiment.}
We referenced the dynamic alignment strategy proposed by DATM \citep{DATM} in single-modal data distillation: as distillation proceeds, dynamically increase the sampling upper bound $T^+$ of the matching starting point. This gradually captures the training dynamics of the teacher at different training stages. We directly applied DATM to existing multimodal data set distillation frameworks, and the experimental results on COCO are shown in Table~\ref{table:coco_datm}. Simply expanding the sampling range of matching starting points through the distillation process still reduces the distillation effect.

\paragraph{More Visualization.}
To better understand the root causes of these challenges, we conducted a detailed analysis by visualizing the loss gradients that were back-propagated from the teacher model to the synthetic dataset across multiple training epochs, as shown in Figure~\ref{fig:moti_app}. Our observations revealed that, as training progressed, the gradient magnitudes steadily increased. More critically, the direction of these gradients oscillated violently between epochs, exhibiting no discernible pattern. This erratic behavior suggests that the training dynamics of the teacher model are inherently unstable, and such instability is directly inherited by the synthetic dataset.

\paragraph{Remark.}
The above analysis inspires us: due to the complexity of multimodal data, the information in different training stages of the teacher model has significant differences. It is necessary to develop a distillation method that can capture the training dynamics of different stages.

\begin{table}[htbp]
\centering
\caption{Results on the COCO dataset. The metrics for training the model on the full dataset are IR@1=16.9, IR@5=41.9, IR@10=55.9; TR@1=19.6, TR@5=45.6, TR@10=59.5.}
\label{table:coco_datm}
\begin{tabular}{ccc|cccc}
\toprule
\multirow{2}{*}{\textbf{Pairs}} & \multirow{2}{*}{\textbf{Ratio}} & \multirow{2}{*}{\textbf{Metric}} & \multicolumn{3}{c}{\textbf{Dataset Distillation}} \\
\cmidrule(lr){4-7}
 & & & \textbf{MTT-VL}\footnotemark[1] & \textbf{LoRS}\footnotemark[2] & \textbf{DATM\footnotemark[2]} & \textbf{PTM-ST}  \\
\midrule

\multirow{6}{*}{\shortstack{100}} & \multirow{6}{*}{0.8\textperthousand}
&   IR@1 & 1.3$\pm$0.1 & 1.7$\pm$0.1 & 1.2$\pm$0.1 & \textbf{2.3$\pm$0.1} \\
& & IR@5 & 5.4$\pm$0.3 & 6.9$\pm$0.2 & 5.1$\pm$0.2 & \textbf{9.0$\pm$0.1} \\
& & IR@10 & 9.5$\pm$0.5 & 11.9$\pm$0.3 & 8.9$\pm$0.4 & \textbf{15.6$\pm$0.3} \\
& & TR@1 & 2.5$\pm$0.3 & 3.0$\pm$0.3 & 2.5$\pm$0.3 & \textbf{4.1$\pm$0.2} \\
& & TR@5 & 10.0$\pm$0.5 & 11.0$\pm$0.2 & 9.1$\pm$0.3 & \textbf{13.4$\pm$0.2} \\
& & TR@10 & 15.7$\pm$0.4 & 18.5$\pm$0.3 & 14.9$\pm$0.3 & \textbf{22.0$\pm$0.2} \\
\midrule

\multirow{6}{*}{\shortstack{200}} & \multirow{6}{*}{1.7\textperthousand}
&   IR@1 & 1.7$\pm$0.1 & 2.2$\pm$0.2 & 1.6$\pm$0.2 & \textbf{3.9$\pm$0.1} \\
& & IR@5 & 6.5$\pm$0.4 & 8.6$\pm$0.1 & 6.3$\pm$0.2 & \textbf{13.7$\pm$0.2} \\
& & IR@10 & 12.3$\pm$0.8 & 14.7$\pm$0.2 & 11.8$\pm$0.2 & \textbf{22.2$\pm$0.2} \\
& & TR@1 & 3.3$\pm$0.2 & 3.6$\pm$0.3 & 2.8$\pm$0.2 & \textbf{5.7$\pm$0.2} \\
& & TR@5 & 11.9$\pm$0.6 & 12.1$\pm$0.2 & 11.1$\pm$0.4 & \textbf{18.2$\pm$0.3} \\
& & TR@10 & 19.4$\pm$1.2 & 20.8$\pm$0.4 & 17.8$\pm$0.3 & \textbf{27.8$\pm$0.3} \\
\midrule

\multirow{6}{*}{\shortstack{500}} & \multirow{6}{*}{4.4\textperthousand}
&   IR@1 & 2.5$\pm$0.5 & 3.3$\pm$0.2 & 2.2$\pm$0.3 & \textbf{6.6$\pm$0.1} \\
& & IR@5 & 8.9$\pm$0.7 & 11.8$\pm$0.4 & 8.1$\pm$0.4 & \textbf{20.5$\pm$0.2} \\
& & IR@10 & 15.8$\pm$1.5 & 19.2$\pm$0.6 & 14.3$\pm$0.7 & \textbf{30.7$\pm$0.2} \\
& & TR@1 & 5.0$\pm$0.4 & 4.1$\pm$0.5 & 3.2$\pm$0.4 & \textbf{6.9$\pm$0.3} \\
& & TR@5 & 17.2$\pm$1.3 & 12.8$\pm$0.4 & 10.9$\pm$0.6 & \textbf{20.1$\pm$0.2}  \\
& & TR@10 & 26.0$\pm$1.9 & 20.2$\pm$0.9 & 18.3$\pm$1.0 & \textbf{30.0$\pm$0.3}  \\

\bottomrule
\end{tabular}
\renewcommand{\arraystretch}{1}
\end{table}

\subsection{Explain of Figure}\label{app:sec:fig_explain}

\begin{figure*}[!t]
    \centering
    \includegraphics[width=1\linewidth]{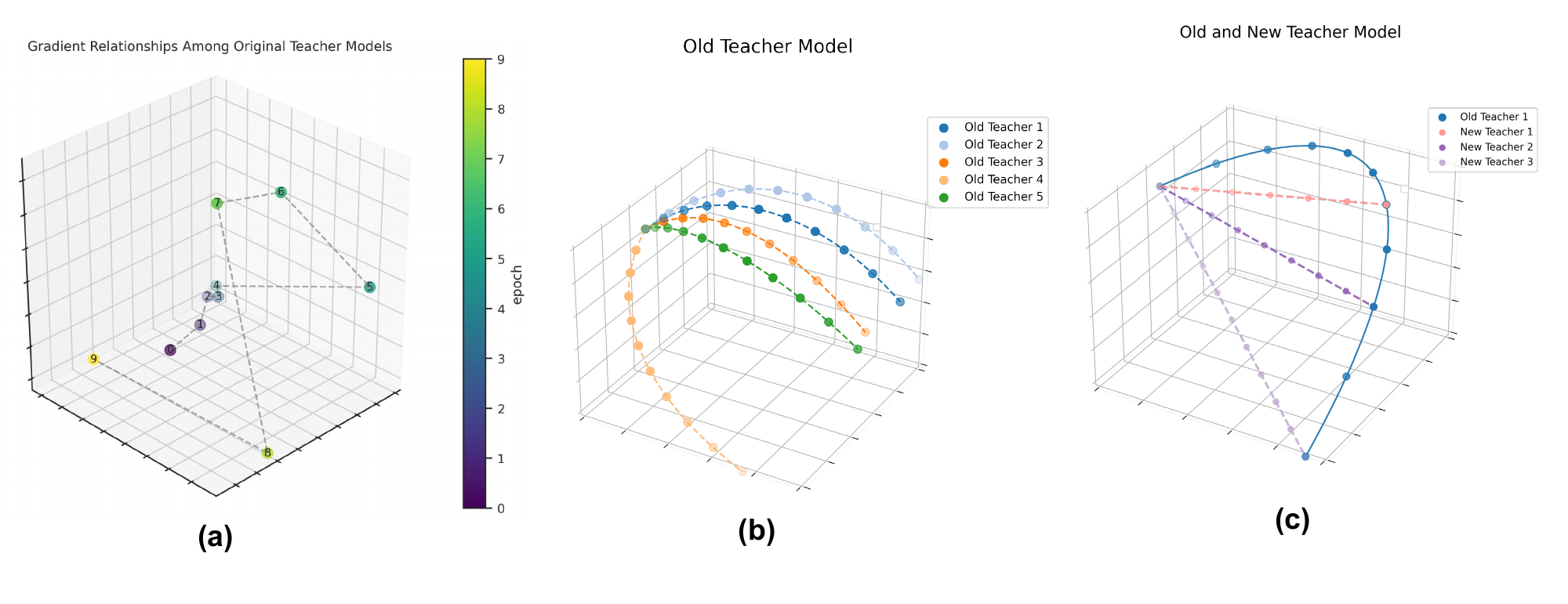}
    \caption{\textbf{(a)} shows the gradients of the teacher model on the synthetic dataset across different epochs. \textbf{(b)} illustrates the parameter differences between various training trajectories of the teacher model. \textbf{(c)} present a visualization of the new teacher model constructed using our proposed method.}
    \label{fig:moti_app}
\end{figure*}

\paragraph{Explain of 3D.}
As illustrated in Figure~\ref{fig:moti_app}, \textbf{(a)} presents a detailed visualization of the gradients propagated from the teacher model to the synthetic dataset at various training epochs. It is apparent that as the training process progresses, the magnitude of the gradients increases substantially. More importantly, the directions of these gradients exhibit erratic fluctuations, with no discernible pattern, highlighting the inherent instability in the teacher model’s learning trajectory as it advances through later training epochs. This behavior further reinforces our earlier assertion that the teacher model becomes increasingly volatile during the later stages of training, a phenomenon that significantly complicates the distillation process for the student model.

In \textbf{(b)}, we depict the parameter differences between teacher models that have been derived from distinct training trajectories. The substantial divergence observed between these trajectories serves as an additional confirmation of the inconsistencies and instability inherent in independently trained teacher models. This variability across trajectories further emphasizes the challenges associated with applying standard distillation techniques, as it introduces significant misalignment between the knowledge encapsulated in different teacher models.

\footnotetext[1]{Intercepted from the original paper.}
\footnotetext[2]{Reproduced by ourselves.}

\textbf{(c)} provides a visualization of the parameter trajectory of the new teacher model generated using our proposed method. The resulting shortcut trajectory, constructed with the strategic interpolation of key teacher checkpoints, exhibits remarkable stability when compared to traditional teacher trajectories. This enhanced stability directly contributes to the consistency of the distilled knowledge, enabling the student model to receive more reliable and coherent supervision signals throughout the training process. Consequently, this improvement in trajectory consistency facilitates a more effective knowledge transfer, ultimately leading to better performance and robust distillation results.

\paragraph{Figure in Main Paper.}
Figure~\ref{fig:Motivation}~(a) in the paper illustrates the performance differences between the teacher model and the student model on the Flickr30k test set. The specific values are shown in Table~\ref{tab:teacher_student}. The teacher model was trained directly on the real dataset, with the x-axis coordinates representing the number of epochs trained. The student model was trained on the distilled dataset, where the epoch count on the x-axis represents the maximum value of the trajectory matching starting point when generating this distilled dataset (max\_start\_epoch / $T^+$). PTM-ST was not used when generating this distilled dataset. The remaining parameters are shown in Table~\ref{tab:flickr_distil_params}.

\begin{table}[htbp]
  \centering
  \caption{Hyper-parameters for the distillation run.}
  \label{tab:flickr_distil_params}
  \begin{tabular}{l|cc}
    \toprule
    \textbf{Parameter} & \textbf{Flickr30k} & \textbf{COCO} \\
    \midrule
    syn\_steps         & 8 & 8 \\
    expert\_epochs     & 1 & 1 \\
    lr\_img            & 1000 & 1000 \\
    lr\_txt            & 1000 & 1000 \\
    lr\_lr             & 1e-2 & 1e-2 \\
    lr\_teacher\_img   & 0.1 & 0.1 \\
    lr\_teacher\_txt   & 0.1 & 0.1 \\
    lr\_sim            & 10 & 50 \\
    sim\_type          & lowrank & lowrank \\
    sim\_rank          & 20 & 40 \\
    alpha              & 0.01 & 1.0 \\
    num\_queries       & 499 & 499 \\
    mini\_batch\_size  & 40 & 30 \\
    loss\_type         & wBCE & wBCE \\
    \bottomrule
  \end{tabular}
\end{table}

In Figure~\ref{fig:Motivation}~(b), we used the original expert trajectories to calculate the gradient of the trajectory matching loss on the distilled dataset under different matching ranges, and visualised it using PCA dimensionality reduction. Specifically, we selected the first 200 images from Flickr30K as the initialisation for the synthetic dataset, and calculated the trajectory matching loss $\mathcal{L}_{MTT}$ using different trajectory matching starting points (as indicated by the numbers in the figure). We then backpropagated this loss to the distilled dataset to obtain the gradient.

In Figure~\ref{fig:cos_sim}, we calculate the gradient on the synthetic dataset using the original expert trajectory and the trajectory after interpolation. The interpolation endpoint is the endpoint of the original trajectory. We extract the gradient of the image part and calculate the cosine similarity between each pair, and plot the result in Figure~\ref{fig:cos_sim} in the paper.

\section{Proof of the proposition}\label{app:sec:proof}

\vspace{0.3em}
\paragraph{Notation.} Let the original teacher trajectory be $\tau = \{\theta_{0},\theta_{1},\dots,\theta_n\}$. We reparameterize this trajectory by linear interpolation:
  \[
      \theta_t \;=\; \theta_0 + t\,\Delta\theta,
      \quad
      t\in[0, n],
      \quad
      \Delta\theta := \theta_n-\theta_0.
  \]
Thus the direction is constant at all times (i.e.\ a straight line with uniform speed).

\noindent For a given distilled dataset \(\tilde{\mathcal{D}}\), one SGD step with step size \(\eta\) is denoted:
  \[
     \tilde{\theta}_t(\tilde{\mathcal{D}}) \;=\; \theta_t - \eta\,g_t,
     \quad
     g_t := \nabla_\theta \ell(\tilde{\mathcal{D}},\theta_t),
  \]
  where \(\ell\) is the training loss (e.g.\ InfoNCE or wBCE).

\paragraph{Trajectory‐Matching Loss.}
We wish to match a trajectory segment of length \(M\) so that the student move approximates the “target step” \(M\Delta\theta\). For two starting points \(t\) and \(t+\Delta t\), the matching losses are
\begin{align}
    \mathcal{L}_1(\tilde{\mathcal{D}})
    &= \frac{\bigl\lVert \tilde{\theta}_t - \theta_t - M_1\,\Delta\theta \bigr\rVert}
            {M_1\,\lVert\Delta\theta\rVert},
    \\
    \mathcal{L}_2(\tilde{\mathcal{D}})
    &= \frac{\bigl\lVert \tilde{\theta}_{t+\Delta t} - \theta_{t+\Delta t} - M_2\,\Delta\theta \bigr\rVert}
            {M_2\,\lVert\Delta\theta\rVert}.
\end{align}
Substituting the SGD update gives:
\begin{equation}\label{eq:v_def}
    v_1 := \eta\,g_t + M_1\,\Delta\theta,
    \quad
    v_2 := \eta\,g_{t+\Delta t} + M_2\,\Delta\theta,
\end{equation}
so that:
\[
  \mathcal{L}_1(\tilde{\mathcal{D}})
  = \frac{\lVert v_1\rVert}{M_1\lVert\Delta\theta\rVert},
  \quad
  \mathcal{L}_2(\tilde{\mathcal{D}})
  = \frac{\lVert v_2\rVert}{M_2\lVert\Delta\theta\rVert}.
\]

\paragraph{Gradients w.r.t.\ the distilled dataset \(\tilde{\mathcal{D}}\).}
By differentiating the norm and applying the chain rule, one obtains:
\begin{align}
  \nabla_{\tilde{\mathcal{D}}} \mathcal{L}_1(\tilde{\mathcal{D}})
  &= \frac{1}{M_1\lVert\Delta\theta\rVert}
     \frac{v_1^\top}{\lVert v_1\rVert}\,
     \bigl(-\eta\,\nabla_{\tilde{\mathcal{D}}} g_t\bigr),
  \label{eq:dL1}
  \\
  \nabla_{\tilde{\mathcal{D}}} \mathcal{L}_2(\tilde{\mathcal{D}})
  &= \frac{1}{M_2\lVert\Delta\theta\rVert}
     \frac{v_2^\top}{\lVert v_2\rVert}\,
     \bigl(-\eta\,\nabla_{\tilde{\mathcal{D}}} g_{t+\Delta t}\bigr).
  \label{eq:dL2}
\end{align}

\bigskip
\noindent\textbf{Main Proposition: Linear convergence of gradient‐difference on the interpolated trajectory.}

\begin{prop}
Under the following assumptions:
\begin{enumerate}
  \item The loss \(\ell(\tilde{\mathcal{D}},\theta)\) is twice continuously differentiable in \(\theta\), and its Hessian
    \[
      H_t := \nabla^2_{\theta\theta}\,\ell(\tilde{\mathcal{D}},\theta_t)
    \]
    along the trajectory \(\{\theta_t\}\) satisfies a Lipschitz condition
    \(\|H_{t'}-H_t\|\le L_H\,|t'-t|\), and in addition \(\|H_t\|\le H_{\max}\) for some constant \(H_{\max}\);
  \item The mixed Hessian
    \(\nabla^2_{\tilde{\mathcal{D}}\theta}\,\ell(\tilde{\mathcal{D}},\theta)\)
    has spectral norm bounded by \(L_{\tilde{\mathcal{D}}\theta}\).
\end{enumerate}
Then for the interpolated trajectory \(\theta_t=\theta_0+t\Delta\theta\) (\(\Delta\theta=\theta_n-\theta_0\)), one has
\begin{equation}\label{eq:main_bound}
  \bigl\lVert \nabla_{\tilde{\mathcal{D}}} \mathcal{L}_2(\tilde{\mathcal{D}})
             - \nabla_{\tilde{\mathcal{D}}} \mathcal{L}_1(\tilde{\mathcal{D}}) \bigr\rVert
  \;\le\; 
  K\,\Delta t \;+\;\mathcal O(\Delta t^2),
  \quad
  K = \frac{\eta}{M\,\lVert\Delta\theta\rVert}
      \bigl(L_H\,\lVert\Delta\theta\rVert + L_{\tilde{\mathcal{D}}\theta}\bigr).
\end{equation}
\end{prop}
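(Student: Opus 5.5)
The plan is to work directly from the closed forms \eqref{eq:dL1}--\eqref{eq:dL2}. We take $M_1=M_2=M$, so both losses share the prefactor $c:=\eta/(M\lVert\Delta\theta\rVert)$. Write $u_i:=v_i/\lVert v_i\rVert$, $J_1:=\nabla_{\tilde{\mathcal{D}}}g_t$ and $J_2:=\nabla_{\tilde{\mathcal{D}}}g_{t+\Delta t}$. Then the gradient difference is $-c\,(u_2^\top J_2-u_1^\top J_1)$, and we split it telescopically as
\[
u_2^\top J_2-u_1^\top J_1=u_2^\top(J_2-J_1)+(u_2-u_1)^\top J_1 .
\]
Since $\lVert u_2\rVert=1$ and $\lVert J_1\rVert\le L_{\tilde{\mathcal{D}}\theta}$ by Assumption 2, we only need two estimates: one for $\lVert J_2-J_1\rVert$ and one for $\lVert u_2-u_1\rVert$. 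Each should be linear in $\Delta t$, up to $\mathcal O(\Delta t^2)$.

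\textbf{Step 1 (the term $u_2-u_1$).} Along the straight-line trajectory $\theta_{t+s}=\theta_t+s\Delta\theta$, the fundamental theorem of calculus gives
\[
g_{t+\Delta t}-g_t=\int_0^{\Delta t}H_{t+s}\Delta\theta\,ds=H_t\Delta\theta\,\Delta t+\mathcal O(L_H\lVert\Delta\theta\rVert\Delta t^2).
\]
Hence $v_2-v_1=\eta(g_{t+\Delta t}-g_t)$ has norm at most $\eta H_{\max}\lVert\Delta\theta\rVert\Delta t+\mathcal O(\Delta t^2)$. The map $v\mapsto v/\lVert v\rVert$ is $2/\lVert v\rVert$-Lipschitz away from the origin, and $v_i\approx M\Delta\theta$ is dominated by the target step, so $\lVert v_1\rVert$ is of order $M\lVert\Delta\theta\rVert$. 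This makes $\lVert u_2-u_1\rVert=\mathcal O(\eta\Delta t/M)$. This contribution comes from the \emph{constant direction} $\Delta\theta$ of the interpolated trajectory: $v_1$ and $v_2$ differ only through the small SGD correction and share the same target term $M\Delta\theta$. We would absorb it into the $L_{\tilde{\mathcal{D}}\theta}$ part of $K$ via the bound on $J_1$. Alternatively, if it does not match the stated constant exactly, we would note that it is of order $\eta^2$ and treat it as higher order in the small step size, as the statement implicitly does.

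\textbf{Step 2 (the term $J_2-J_1$).} The mixed derivative $\nabla_{\tilde{\mathcal{D}}}g_\theta=\nabla^2_{\tilde{\mathcal{D}}\theta}\ell(\tilde{\mathcal{D}},\theta)$ is evaluated at two points $\lVert\Delta\theta\rVert\Delta t$ apart. We need its variation along the line. We would use the regularity in Assumption 1, interpreted as Lipschitz control of the second derivatives along the trajectory with rate $L_H$ in $t$, to get $\lVert J_2-J_1\rVert\le L_H\lVert\Delta\theta\rVert\Delta t+\mathcal O(\Delta t^2)$. Combining Steps 1 and 2 and multiplying by $c$ then yields exactly
\[
c\bigl(L_H\lVert\Delta\theta\rVert+L_{\tilde{\mathcal{D}}\theta}\bigr)\Delta t+\mathcal O(\Delta t^2)=K\Delta t+\mathcal O(\Delta t^2).
\]

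\textbf{Main obstacle.} The hardest step is Step 2. Assumption 1 literally controls only $\nabla^2_{\theta\theta}\ell$, not the mixed block, so bounding $J_2-J_1$ needs an extra argument. First we would try to read the Lipschitz hypothesis as applying to the full Hessian in $(\tilde{\mathcal{D}},\theta)$ along the trajectory, which is the natural intent. Failing that, we would add the mild assumption that the mixed Hessian is $L_H$-Lipschitz in $\theta$.

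A secondary obstacle is the nondegeneracy $\lVert v_i\rVert\gtrsim M\lVert\Delta\theta\rVert$ used in Step 1. We would justify it by requiring $\eta\lVert g\rVert\ll M\lVert\Delta\theta\rVert$, i.e.\ that the student step is small relative to the matching segment. Finally, for the original, non-interpolated trajectory the increments $\theta_{t+\Delta t}-\theta_t$ change direction arbitrarily. The Step 1 integral then has no $\Delta t\,\Delta\theta$ structure, so no linear bound follows, which matches the remark after the statement.
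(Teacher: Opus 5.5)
Your proposal is essentially the paper's own argument. The paper likewise expands $g$ along the straight line $\theta_t+s\,\Delta\theta$ using the Hessian bounds, writes $\nabla_{\tilde{\mathcal{D}}}\mathcal{L}_i=-\frac{\eta}{M_i\lVert\Delta\theta\rVert}\,\hat v_i^\top G_i$, and bounds the difference through the same split into a $\hat v_2-\hat v_1$ term and a $G_{t+\Delta t}-G_t$ term, using $\lVert\hat v_1\rVert=1$ and $\lVert G_t\rVert\le L_{\tilde{\mathcal{D}}\theta}$.

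The hypotheses you make explicit are exactly what the paper's step (iii) needs but does not state: $M_1=M_2$, Lipschitz continuity in $\theta$ of the mixed Hessian, and nondegeneracy of $v_i$. The paper relies on them when it asserts $\lVert\dot G_t\rVert\le L_{\tilde{\mathcal{D}}\theta}\lVert\Delta\theta\rVert$ and $\hat v_2=\hat v_1+\mathcal O(\Delta t)$.

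The one point to tighten is the $(u_2-u_1)^\top J_1$ term. It fits inside $K$ only under an explicitly stated step-size condition such as $\eta\bigl(H_{\max}\lVert\Delta\theta\rVert+\lVert g_t\rVert\bigr)\le M\lVert\Delta\theta\rVert$, which makes the first-order coefficient of $\lVert u_2-u_1\rVert$ at most $1$. Discarding this term as $\mathcal O(\eta^2)$ is not covered by the $\mathcal O(\Delta t^2)$ remainder.
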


\begin{proof}
For brevity set
\[
  g_t := \nabla_\theta \ell(\tilde{\mathcal{D}},\theta_t),
  \quad
  G_t := \nabla_{\tilde{\mathcal{D}}} g_t
           = \nabla^2_{\tilde{\mathcal{D}}\theta}\,\ell(\tilde{\mathcal{D}},\theta_t),
  \quad
  H_t := \nabla^2_{\theta\theta}\,\ell(\tilde{\mathcal{D}},\theta_t).
\]
We bound \(\|\nabla_{\tilde{\mathcal{D}}} \mathcal{L}_2 - \nabla_{\tilde{\mathcal{D}}} \mathcal{L}_1\|\) in three steps.

\medskip
\noindent\textbf{(i) Taylor Expansion.}
Along the interpolation path 
\(\theta_{t+\Delta t} = \theta_t + \Delta t\,\Delta\theta\), 
we write \(g(\theta)=\nabla_\theta\ell(S,\theta)\) and expand to second order:

\[
  g_{t+\Delta t}
  = g(\theta_t + \Delta t\,\Delta\theta)
  = g(\theta_t)
    + \int_{0}^{1}
        H\bigl(\theta_t + s\,\Delta t\,\Delta\theta\bigr)
        \bigl(s\,\Delta t\,\Delta\theta\bigr)
      \,\mathrm d s,
\]
where \(H(\theta)=\nabla^2_{\theta\theta}\ell(S,\theta)\).  Split the integral at \(H(\theta_t)\):

\[
  g_{t+\Delta t}
  = g_t
    + H_t\,(\Delta t\,\Delta\theta)
    + \underbrace{
        \int_{0}^{1}
          \Bigl[\,H(\theta_t + s\,\Delta t\,\Delta\theta) - H_t\Bigr]
          \bigl(s\,\Delta t\,\Delta\theta\bigr)
        \,\mathrm d s
      }_{R_t}.
\]
Since \(H\) is \(L_H\)-Lipschitz along the path,
\(\bigl\|H(\theta_t + s\,\Delta t\,\Delta\theta) - H_t\bigr\|
 \le L_H\,(s\,\Delta t)\,\|\Delta\theta\|\),
we bound the remainder:
\begin{equation}
  \|R_t\|
  \le \int_0^1
        L_H\,(s\,\Delta t)\,\|\Delta\theta\|
        \times
        \bigl\|s\,\Delta t\,\Delta\theta\bigr\|
      \,\mathrm d s
  = L_H\,\|\Delta\theta\|^2(\Delta t)^2
    \int_0^1 s^2\,\mathrm d s
  = \frac{L_H}{3}\,\|\Delta\theta\|^2\,(\Delta t)^2
\end{equation}
So that:

\begin{equation}\label{eq:taylor-expansion}
  g_{t+\Delta t}
  = g_t
    + \Delta t\,H_t\,\Delta\theta
    + \tfrac12(\Delta t)^2\,R_t,
  \quad
  \|R_t\|\le L_H\,\|\Delta\theta\|.
\end{equation}
Since \(\|H_t\|\le H_{\max}\), the term \(\Delta t\,H_t\,\Delta\theta\) is itself \(\mathcal O(\Delta t)\).

\medskip
\noindent\textbf{(ii) Gradient expressions.}
Using the definition \(\tilde\theta_i=\theta_i-\eta g_i\) and the chain rule,
\[
  \mathcal{L}_i(\tilde{\mathcal{D}})
  = \frac{\|\eta g_i + M_i\Delta\theta\|}{M_i\|\Delta\theta\|},
  \quad
  \nabla_{\tilde{\mathcal{D}}}\mathcal{L}_i
  = -\frac{\eta}{M_i\|\Delta\theta\|}\,\hat v_i^\top\,G_i,
\]
where \(v_i=\eta g_i+M_i\Delta\theta\) and \(\hat v_i=v_i/\|v_i\|\).  Hence
\[
  \nabla_{\tilde{\mathcal{D}}} \mathcal{L}_2 - \nabla_{\tilde{\mathcal{D}}} \mathcal{L}_1
  = -\frac{\eta}{\|\Delta\theta\|}\Bigl(\tfrac1{M_2}\hat v_2^\top G_{t+\Delta t}
                                 -\tfrac1{M_1}\hat v_1^\top G_t\Bigr).
\]

\medskip
\noindent\textbf{(iii) Bounding the difference.}
From \(\eqref{eq:taylor-expansion}\) and normalization one shows \(\hat v_2=\hat v_1+\mathcal O(\Delta t)\), and similarly
\[
  G_{t+\Delta t}
  = G_t + \Delta t\,\dot G_t + \mathcal O(\Delta t^2),
  \quad
  \|\dot G_t\|\le L_{\tilde{\mathcal{D}}\theta}\,\|\Delta\theta\|.
\]
Applying the triangle inequality and Cauchy–Schwarz yields
\[
  \|\nabla_{\tilde{\mathcal{D}}}\mathcal{L}_2
     - \nabla_{\tilde{\mathcal{D}}}\mathcal{L}_1\|
  \le
  \frac{\eta}{M\|\Delta\theta\|}
  \Bigl(\|\hat v_1\|\|\dot G_t\|
      + \|G_t\|\|\hat v_2-\hat v_1\|\Bigr)\,\Delta t
  + \mathcal O(\Delta t^2).
\]
Here \(M=\min\{M_1, M_2\}\), so that the denominator \(M\|\Delta\theta\|\) scales the bound by the size of that segment.
Since \(\|\hat v_1\|=1\) and \(\|G_t\|\le L_{\tilde{\mathcal{D}}\theta}\), setting
\[
  K = \frac{\eta}{M\,\lVert\Delta\theta\rVert}
      \bigl(L_H\,\lVert\Delta\theta\rVert + L_{\tilde{\mathcal{D}}\theta}\bigr)
\]
completes the proof of \(\eqref{eq:main_bound}\).
\end{proof}

\noindent\textbf{Remark.}
On the \emph{non-interpolated} trajectory, each segment direction \(\Delta\theta_p\) may differ substantially from the global \(\Delta\theta\).  This effectively amplifies \(\|\hat v_2-\hat v_1\|\) by a factor \(\alpha>1\), leading to a larger constant in the bound and thus explaining the empirically observed lower gradient cosine‐similarity. 
\section{Experiment}\label{app:sec:experiment}

For LoRS and PTM-ST, similarity matrices require additional memory. To ensure fairness, we remove one data pair, saving $151\text{K}$ parameters ($3 \times 224^2 + 768$). In PTM-ST, no phase split is used for 100 pairs; we use (99 + 100) for 200 and (200 + 299) for 500. For 500 pairs, the similarity matrix size is $129\text{K}$, smaller than one sample's footprint. This ensures a memory-fair comparison across all settings. 

\subsection{Hyper-parameters}\label{app:sec:hyper}
This section shows the specific settings we obtained for the teacher trajectory and distillation process hyper-parameters, as shown in Table~\ref{app:tab:buffer_hyper} and Table~\ref{app:tab:distill_hyper}. The ema\_decay is 0.99 for all experiment. We use the SGD optimizer to train the model with momentum=0.9, weight\_decay=0.0005.
For the student training, we adopt hyper-parameters shown in Table~\ref{app:tab:student_hyper}. It should be noted that during the distillation process, the learning rate of the training student model is learnable, and this learning rate is lr\_lr=0.01, and the table shows the values taken at the time of initialization.

\begin{table}[htbp]
  \centering
  \begin{minipage}{0.48\textwidth}
    \centering
    \caption{Hyperparameters for teacher training.}
    \label{app:tab:buffer_hyper}
    \begin{tabular}{l|cc}
      \toprule
      & \textbf{Flickr-30K} & \textbf{MS-COCO} \\
      \midrule
      epoch / $n$      & 10  & 10  \\
      num\_experts     & 20  & 20  \\
      batch\_size      & 128 & 128 \\
      lr\_teacher\_img & 0.1 & 0.1 \\
      lr\_teacher\_txt & 0.1 & 0.1 \\
      image\_size      & 224$\times$224 & 224$\times$224 \\
      \bottomrule
    \end{tabular}
  \end{minipage}
  \hfill
  \begin{minipage}{0.48\textwidth}
    \centering
    \captionof{table}{Hyperparameter for student training.}
    \label{app:tab:student_hyper}
    \begin{tabular}{l|cc}
      \toprule
      & \textbf{Flickr-30K} & \textbf{MS-COCO} \\
      \midrule
      epoch (per subset) & 50      & 50      \\
      batch\_size        & 128     & 128     \\
      lr\_img            & 0.1     & 0.1     \\
      lr\_txt            & 0.1     & 0.1     \\
      image\_size        & 224$\times$224 & 224$\times$224 \\
      \bottomrule
    \end{tabular}
  \end{minipage}
\end{table}

\begin{table}[t]
  \centering
  \caption{Hyper-parameter settings for distillation.}
  \label{app:tab:distill_hyper}
  \resizebox{\textwidth}{!}{
  \begin{tabular}{l|cccc|cccc}
    \toprule
    & \multicolumn{4}{c|}{\textbf{Flickr-30K}} & \multicolumn{4}{c}{\textbf{MS-COCO}} \\
    \cmidrule(lr){2-5}\cmidrule(lr){6-9}
    & 100 & 200 & 500 & 1000 & 100 & 200 & 500 & 1000 \\
    \midrule
    subset\_num / $P$     & 1    & 2    & 2     & 3     & 1    & 2    & 2     & 3    \\
    syn\_steps / $t$         & 8    & 8    & 8     & 8     & 8    & 8    & 8     & 8     \\
    expert\_epochs / $\Delta T$     & 1    & 1    & 1     & 1     & 1    & 1    & 1     &  1    \\
    min\_start\_epoch / $T^-_p$    & 0    & 0, 1  & 0, 1   & 0, 1, 2 & 0 & 0, 1  & 2   & 0, 1, 2  \\
    max\_start\_epoch / $T^+_p$    & 2    & 2, 3  & 2, 3   & 3, 4, 5 & 2 & 2, 3  & 2   & 2, 3, 4  \\
    iteration / $I_p$     & 2000 & 2000*2 & 2000*2 & 2000*3 & 2000 & 2000*2 & 2000*2 & 2000,500,1000   \\
    interpolation endpoints / $t_p$ & 6 & 6, 8 & 6, 8  & 6, 8, 10 & 6 & 6, 8 & 6, 8  & 6, 8, 10  \\
    lr\_img               & 1000 & 1000 & 1000  & 1000 & 1000 & 1000 & 1000  & 1000 \\
    lr\_txt               & 1000 & 1000 & 1000  & 1000 & 1000 & 1000 & 5000  & 1000 \\
    lr\_lr                & 1e-2 & 1e-2 & 1e-2  & 1e-2 & 1e-2 & 1e-2 & 1e-2  & 1e-2 \\
    lr\_teacher\_img      & 0.1  & 0.1  & 0.1   & 0.1  & 0.1  & 0.1  & 0.1   & 0.1  \\
    lr\_teacher\_txt      & 0.1  & 0.1  & 0.1   & 0.1  & 0.1  & 0.1  & 0.1   & 0.1  \\
    lr\_sim               & 10.0 & 10.0 & 10.0  & 10.0 & 50.0 & 50.0 & 50.0  & 50.0 \\
    sim\_type             & full & full & full & full & full & full & full & full \\
    num\_queries / $N_p$  & 99 & 99,100 & 200,299 & 332,332,333 & 99 & 99,100 & 200,299 & 200,299,499 \\
    mini\_batch\_size     & 20 & 20 & 40  & 40  & 20  & 20   & 20  & 20  \\
    loss\_type            & wBCE & wBCE & wBCE  & wBCE & wBCE & wBCE & wBCE  & wBCE \\
    \bottomrule
  \end{tabular}}
\end{table}

\subsection{Additional Ablation Study}\label{app:sec:ablation}

In our method, the distillation process is divided into stages, with each stage having different sampling ranges and interpolation endpoints for trajectory matching. As shown in Table~\ref{app:tab:distill_hyper}, the $T_p^-$, $T_p^+$, and $t_p$ values for each stage are shifted backward. To demonstrate this, we conducted additional ablation experiments on Flickr30K, keeping all other parameters constant while omitting this shift, and compared the distillation results. The results are shown in Table~\ref{app:tab:ablation}, where PTM without shift indicates that $T_p^-$ and $T_p^+$ are the same for each stage, and ST without shift indicates that $t_p$ is the same for each stage.

\begin{table}[htbp]
\centering
\caption{Ablation on whether to shift under different data sizes.}
\label{app:tab:ablation}
\resizebox{\textwidth}{!}{
\begin{tabular}{cl|cccccc|c}
\toprule
\textbf{Pairs} & \textbf{Setting} & \textbf{IR@1} & \textbf{IR@5} & \textbf{IR@10} & \textbf{TR@1} & \textbf{TR@5} & \textbf{TR@10} & \textbf{Mean} \\
\midrule
\multirow{3}{*}{200}  & PTM w/o shift & 11.8 & 33.2 & 46.3 & 18.1 & 44.3 & 59.0 & 35.5 \\
                      & ST w/o shift  & 11.6 & 33.3 & 47.0 & 18.5 & 43.5 & 57.8 & 35.3 \\
                      & \cellcolor{gray!20}full & \cellcolor{gray!20}12.5 & \cellcolor{gray!20}34.7 & \cellcolor{gray!20}48.5 & \cellcolor{gray!20}19.3 & \cellcolor{gray!20}45.9 & \cellcolor{gray!20}58.9 & \cellcolor{gray!20}\textbf{36.6} \\
\midrule
\multirow{3}{*}{500}  & PTM w/o shift & 15.2 & 39.6 & 53.1 & 21.6 & 48.6 & 62.9 & 40.2 \\
                      & ST w/o shift  & 15.3 & 39.9 & 53.5 & 21.1 & 49.6 & 63.1 & 40.4 \\
                      & \cellcolor{gray!20}full & \cellcolor{gray!20}16.0 & \cellcolor{gray!20}40.5 & \cellcolor{gray!20}54.0 & \cellcolor{gray!20}22.2 & \cellcolor{gray!20}51.1 & \cellcolor{gray!20}64.6 & \cellcolor{gray!20}\textbf{41.4} \\
\midrule
\multirow{3}{*}{1000} & PTM w/o shift & 16.9 & 41.9 & 55.4 & 21.6 & 49.3 & 62.5 & 41.3 \\
                      & ST w/o shift  & 16.8 & 42.7 & 56.5 & 23.6 & 51.4 & 64.9 & 42.7 \\
                      & \cellcolor{gray!20}full & \cellcolor{gray!20}17.5 & \cellcolor{gray!20}42.8 & \cellcolor{gray!20}56.3 & \cellcolor{gray!20}23.8 & \cellcolor{gray!20}53.6 & \cellcolor{gray!20}67.2 & \cellcolor{gray!20}\textbf{43.5} \\
\bottomrule
\end{tabular}
}
\end{table}

As can be seen from the table, both the sampling range and the offset of the interpolation endpoint can improve the distillation effect. This confirms that our method is conducive to capturing the dynamics of the model's later training in the synthetic dataset.

\subsection{Comparison of Different Interpolation Endpoints and Matching Ranges}\label{app:sec:endpoint}
\paragraph{Interpolation endpoints.} Our approach necessitates specifying interpolation endpoints for each stage in advance. Consequently, we established several sets of distinct interpolation endpoint parameters for each pair of distilled models and conducted experiments on the Flickr-30k dataset. All other parameters remained identical to those in Table~\ref{app:tab:distill_hyper}, with the experimental results presented in Table~\ref{app:tab:endpoint}.

\begin{table}[htbp]
  \centering
  \caption{Comparison of distillation results at different interpolation endpoints.}
  \label{app:tab:endpoint}
  \begin{tabular}{c | c | r r r r r r r}
    \toprule
    Pairs & Interpolation endpoint & IR@1 & IR@5 & IR@10 & TR@1 & TR@5 & TR@10 & Mean \\
    \midrule
    \multirow{3}{*}{100}
      & 6      &  9.6 & 28.4 & 41.5 & 14.4 & 38.6 & 52.7 & \textbf{30.9} \\
      & 8      &  8.4 & 26.6 & 39.7 & 15.4 & 39.9 & 54.6 & 30.8 \\
      & 10     &  8.8 & 26.8 & 39.7 & 14.9 & 39.6 & 53.6 & 30.6 \\
    \midrule
    \multirow{3}{*}{200}
      & 4, 6  & 11.7 & 33.2 & 46.3 & 18.5 & 43.5 & 58.6 & 35.3 \\
      & 6, 8  & 12.5 & 34.7 & 48.5 & 19.3 & 45.9 & 58.9 & \textbf{36.6} \\
      & 8, 10 & 11.2 & 32.3 & 45.4 & 18.1 & 46.4 & 60.2 & 35.6 \\
    \midrule
    \multirow{3}{*}{500}
      & 4, 6  & 15.3 & 39.6 & 53.1 & 22.6 & 49.2 & 63.4 & 40.5 \\
      & 6, 8  & 16.0 & 40.5 & 54.0 & 22.2 & 51.1 & 64.6 & \textbf{41.4} \\
      & 8, 10 & 14.9 & 38.8 & 52.2 & 22.1 & 51.2 & 64.4 & 40.6 \\
    \midrule
    \multirow{3}{*}{1000}
      & 4, 7, 10  & 16.6 & 42.5 & 56.2 & 23.7 & 50.2 & 62.5 & 42.0 \\
      & 6, 8, 10  & 17.5 & 42.8 & 56.3 & 23.8 & 53.6 & 67.2 & \textbf{43.5} \\
      & 8, 9, 10 & 17.1 & 43.2 & 57.1 & 23.2 & 52.8 & 65.7 & 43.2 \\
    \bottomrule
  \end{tabular}
\end{table}

The result demonstrates that the selection of interpolation endpoints also exerts a certain influence on the final results. It is necessary to select appropriate values to ensure the distilled dataset can capture the training information corresponding to the teacher model's respective stages.

\paragraph{Matching ranges.} For the matching range $(T_p^-, T_p^+)$, we adopt a simple linear strategy in our experiments: $T_p^-=T_1^-+\Delta d(p-1)$, $T_p^+=T_1^++\Delta d(p-1)$. To further assess the effect of varying $\Delta d$, we conduct additional ablation studies under the Flickr30k 500-pair setting and the results are shown in Table~\ref{app:tab:delta_d}.

\begin{table}[htbp]
\centering
\caption{Effect of $\Delta d$ on retrieval performance.}
\label{app:tab:delta_d}
\begin{tabular}{l c c c c c c c}
\toprule
Method & IR@1 & IR@5 & IR@10 & TR@1 & TR@5 & TR@10 & Mean \\
\midrule
$\Delta d = 0$ & 15.2 & 39.6 & 53.1 & 21.6 & 48.6 & 62.9 & 40.2 \\
$\Delta d = 1$ & 16.0 & 40.5 & 54.0 & 22.2 & 51.1 & 64.6 & 41.4 \\
$\Delta d = 2$ & 15.9 & 40.1 & 53.2 & 21.0 & 50.1 & 63.7 & 40.7 \\
LoRS           & 12.7 & 32.9 & 44.9 & 14.7 & 37.6 & 51.1 & 32.3 \\
\bottomrule
\end{tabular}
\end{table}

The results reveal that although different parameter choices introduce some variation, the overall influence is limited; more importantly, our method consistently outperforms the LoRS baseline across all configurations. This demonstrates the robustness and practicality of our approach.

\subsection{Distilation on Powerful Vision-Language models}\label{app:sec:dinov2}
To further validate the applicability of our method to larger-scale models, we conduct additional experiments using the more powerful DINO-v2~\citep{dinov2} and BGE-1.5~\citep{bge} as the image and text encoders, respectively. Due to computing resource constraints, both encoders are kept frozen, and a single-layer trainable projection head was added on top of them.

\begin{table}[htbp]
\caption{Results on DiNo-v2 + BGE-1.5 on MS-COCO. The model trained on the full dataset performs: IR@1=22.5, IR@5=50.8, IR@10=65.0, TR@1=31.7, TR@5=61.4, TR@10=74.0.}
\label{app:tab:dinov2}
\centering
\resizebox{\columnwidth}{!}{
\begin{tabular}{c|cccccc|cccccc}
\toprule
\multirow{2}{*}{Pairs} 
  & \multicolumn{6}{c|}{LoRS} 
  & \multicolumn{6}{c}{Ours} \\
\cmidrule(lr){2-7}\cmidrule(lr){8-13}
& IR@1 & IR@5 & IR@10 & TR@1 & TR@5 & TR@10
& IR@1 & IR@5 & IR@10 & TR@1 & TR@5 & TR@10 \\
\midrule
100 & 5.8 & 20.6 & 30.7 & 10.2 & 29.4 & 39.8
    & 8.1 & 23.5 & 34.9 & 13.5 & 33.8 & 46.4 \\
200 & 6.1 & 20.9 & 32.1 & 12.0 & 31.4 & 43.1
    & 8.4 & 24.1 & 35.4 & 14.1 & 35.6 & 48.4 \\
500 & 7.7 & 22.9 & 33.3 & 12.7 & 33.5 & 46.1
    & 9.1 & 25.9 & 37.2 & 15.0 & 36.5 & 50.1 \\
\bottomrule
\end{tabular}
}
\end{table}

As shown in Table~\ref{app:tab:dinov2}, evaluation on the COCO dataset shows that our method remains effective even when scaling up the model capacity. This demonstrates that the proposed approach generalizes well across different backbone sizes.

\subsection{Cross Architecture Generalization}\label{app:sec:Generalization}
To validate the generalisation performance of the synthetic dataset, we conducted cross-model evaluation.
The data is distilled with NFNet+BERT and evaluated on other architectures such as RegNet \citep{radosavovic2020designing} and DistilBERT \citep{sanh2019distilbert}. 
As shown in Table~\ref{app:tab:generalization}, our method demonstrates good generalisation performance and outperforms the baseline method.

\begin{table}[htbp]
\centering
\caption{Cross architecture generalization.}
\label{app:tab:generalization}
\resizebox{\textwidth}{!}{
\begin{tabular}{ll|cccccc|c}
\toprule
\textbf{Method} & \textbf{Model} & \textbf{IR@1} & \textbf{IR@5} & \textbf{IR@10} & \textbf{TR@1} & \textbf{TR@5} & \textbf{TR@10} & \textbf{Mean} \\
\midrule
\multirow{3}{*}{LoRS} 
    & \textsc{NFNet + BERT}        & 12.1 & 33.2 & 45.9 & 16.6 & 42.2 & 55.7 & 34.3 \\
    & \textsc{NFNet + DistilBERT}  & 7.6  & 24.5 & 34.7 & 12.7 & 34.5 & 46.1 & 26.7 \\
    & \textsc{RegNet + BERT}       & 4.5  & 15.0 & 23.7 & 5.4  & 21.0 & 32.4 & 17.0 \\
\midrule
\multirow{3}{*}{Ours} 
    & \textsc{NFNet + BERT}        & 14.6 & 39.1 & 52.5 & 21.5 & 50.6 & 64.0 & \textbf{40.4} \\
    & \textsc{NFNet + DistilBERT}  & 11.3 & 29.8 & 41.3 & 19.0 & 41.3 & 56.5 & \textbf{33.2} \\
    & \textsc{RegNet + BERT}       & 4.2  & 14.4 & 22.3 & 8.7  & 23.8 & 36.0 & \textbf{18.2} \\
\bottomrule
\end{tabular}}
\end{table}

\subsection{Time and Memory Overhead}\label{sec:runtime_memory}
We recorded the computational time and memory consumption of our proposed method and the LoRS method across key experiments, as shown in Table~\ref{tab:runtime_memory}. Specifically, we documented the average time required per iteration to update the synthetic dataset during distillation (in seconds per iteration) and the peak and average memory consumption throughout the process (in gigabytes). It can be observed that our method incurs slightly lower time and VRAM costs than the original approach. This is attributable to our phased division, wherein each phase distils only a subset of data, eliminating the need to store computationally intensive information across an entire phase.

\begin{table}[htbp]
  \centering
  \footnotesize
  \setlength{\tabcolsep}{5pt}
  \caption{Runtime and memory comparison across different numbers of pairs.}
  \label{tab:runtime_memory}
  \begin{tabular}{l*{9}{r}}
    \toprule
    \multirow{2}{*}{Method}
      & \multicolumn{3}{c}{time (s/iter)}
      & \multicolumn{3}{c}{peak memory (GB)}
      & \multicolumn{3}{c}{avg memory (GB)} \\
    \cmidrule(lr){2-4}\cmidrule(lr){5-7}\cmidrule(lr){8-10}
      & 100 & 200 & 500 & 100 & 200 & 500 & 100 & 200 & 500 \\
    \midrule
    LoRS & 4.522 & 4.913 & 7.392 & 13.713 & 16.333 & 20.942 & 5.339 & 5.586 & 7.985 \\
    Ours & 4.570 & 4.554 & 7.250 & 13.768 & 13.818 & 20.363 & 5.390 & 5.404 & 7.592 \\
    \bottomrule
  \end{tabular}
\end{table}
\section{An Attempt at the NCFM Method}\label{app:sec:NCFM}

\paragraph{Introduction.} Neural Characteristic Function Matching (NCFM) \citep{wang2025dataset} has recently demonstrated outstanding performance in uni-modal dataset distillation. It first uses an encoder to extract high-dimensional vector representations of images, then calculates the Neural Characteristic Function Discrepancy (NCFD) as the matching loss between the real and synthetic datasets. It employs a minmax adversarial update strategy to optimise the frequency sampling network to maximise this loss, while updating the synthetic dataset to minimise it. The distillation process can be represented as follows:

\begin{equation}
\begin{aligned}
\min_{\tilde{\mathcal{D}}} \max_{\psi} \mathcal{L}(\tilde{\mathcal{D}}, \mathcal{D}, f, \psi) &= \min_{\tilde{\mathcal{D}}} \max_{\psi} \mathbb{E}_{x \sim \mathcal{D}, \tilde{x} \sim \tilde{\mathcal{D}}} \mathcal{C}_{\mathcal{T}}(x, \tilde{x}; f, \psi) \\
&= \min_{\tilde{\mathcal{D}}} \max_{\psi} \mathbb{E}_{x \sim \mathcal{D}, \tilde{x} \sim \tilde{\mathcal{D}}} \int_{t} \sqrt{\text{Chf}(t; f)} \, dF_{\mathcal{T}}(t; \psi),
\label{app:eq:cfloss}
\end{aligned}
\end{equation}

\noindent where
\(
\mathrm{Chf}(t) = ( \Phi_{x}(t) - \Phi_{\tilde{x}}(t))(\bar\Phi_{x}(t) - \bar\Phi_{\tilde{x}}(t)),
\)
and can be reformulated as:
\begin{equation}
\begin{aligned}
\text{Chf}(t; f) 
&=  \alpha \left( \left| \left| \Phi_{f(x)}(t) - \Phi_{f(\tilde{x})}(t) \right| \right|^2 \right) \\ & + (1 - \alpha) \cdot \left( 2 \left| \Phi_{f(x)}(t) \right| \left| \Phi_{f(\tilde{x})}(t) \right| \right) \cdot \left(1 - \cos\left(a_{f(x)}(t) - a_{f(\tilde{x})}(t)\right) \right).
\end{aligned}
\end{equation}

Here: \(\Phi_{x}(t) = \mathbb{E}_{x} \left[ e^{j \langle t, x \rangle} \right] = \int_{x} e^{j \langle t, x \rangle} \, dF(x)\) is the characteristic function of $x$; $t$ is the frequency argument; $\left(1 - \cos\left(a_{f(x)}(t) - a_{f(\tilde{x})}(t)\right) \right)$ denotes the phase difference; and $f$ is the image encoder.

\paragraph{Our Attempts.}
Uni-modal data set distillation is mainly used for classification tasks, where NCFM distills $ipc$ synthetic data for each category. Multi-modal data set distillation is used for image-text matching retrieval tasks, which do not have category concepts, so it is not possible to directly apply this method.

To migrate it to multimodal data set distillation, we first perform clustering on the original image-text paired data set. Specifically, for each data point $(x_i, y_i)$, we use the trained teacher models $f_V^*$ and $f_T^*$ to extract its high-dimensional representations $u_i = f_V^*(x_i)$ and $v_i = f_T^*(y_i)$, and concatenate these two modalities' high-dimensional representations to obtain the vector representation of this data point. Then, we perform Mini-Batch K-Means clustering using this vector representation. During distillation, we treat each cluster as a category and extract $ipc$ data points from it for initialisation. We match feature functions of the data points using the same method as NCFM. After passing the encoder, the concatenation of the two modal vector representations corresponds to $x$ in Eq.~\ref{app:eq:cfloss}.

\paragraph{Results.}
We tested this method on Flickr30K. The learning rate for the sampling net and synthetic dataset was set to 0.1, the sampling frequency was set to 512, and the weight $\alpha$ of the phase difference term in the loss function was set to 0.5. The results are shown in Table~\ref{app:tab:ncfm}. 

\begin{table}[htbp]
\centering
\caption{Results of the NCFM method on Flickr30K.}\label{app:tab:ncfm}
\begin{tabular}{cc|cccccc|c}
\toprule
\textbf{Clusters} & \textbf{ipc} & \textbf{IR@1} & \textbf{IR@5} & \textbf{IR@10} & \textbf{TR@1} & \textbf{TR@5} & \textbf{TR@10} & \textbf{Mean} \\
\midrule
100 & 1 & 0.7 & 3.2 & 5.7 & 2.1 & 7.1 & 10.5 & 4.9 \\
100 & 2 & 1.4 & 5.5 & 9.6 & 2.9 & 9.3 & 14.5 & 7.2 \\
100 & 5 & 3.0 & 10.7 & 17.9 & 4.8 & 15.8 & 25.8 & 13.0 \\
200 & 1 & 1.4 & 5.5 & 9.6 & 2.9 & 9.3 & 14.5 & 7.2 \\
200 & 2 & 2.5 & 9.6 & 15.6 & 5.6 & 15.9 & 23.8 & 12.2 \\
500 & 1 & 3.2 & 11.7 & 18.9 & 6.1 & 18.1 & 26.0 & 14.0 \\
\bottomrule
\end{tabular}
\end{table}

The effectiveness of NCFM is inferior to that of methods based on trajectory matching. We speculate that this may be due to the sparsity of multimodal datasets and the more difficult retrieval tasks.
\section{Visualization of Distilled Data}\label{app:sec:visualization}

In this section, we present visualizations of a subset of the synthetic dataset distilled by our proposed method. As illustrated in Figure~\ref{fig:vis_flicker} and Figure~\ref{fig:vis_coco}, the synthesized samples exhibit two notable characteristics. First, the textual descriptions become significantly more informative and semantically rich, providing more nuanced guidance for downstream learning. Second, the visual content contains subtle and complex perturbations that are difficult for the human eye to interpret, yet potentially essential for enhancing student model learning.

These findings suggest that our method does not simply replicate the teacher's outputs, but rather generates enriched supervision signals that span both linguistic and visual modalities. This further demonstrates the effectiveness of our approach in constructing high-quality synthetic data for multimodal knowledge distillation.

\begin{figure}[!t]
    \centering
    \includegraphics[page=1,width=1\linewidth]{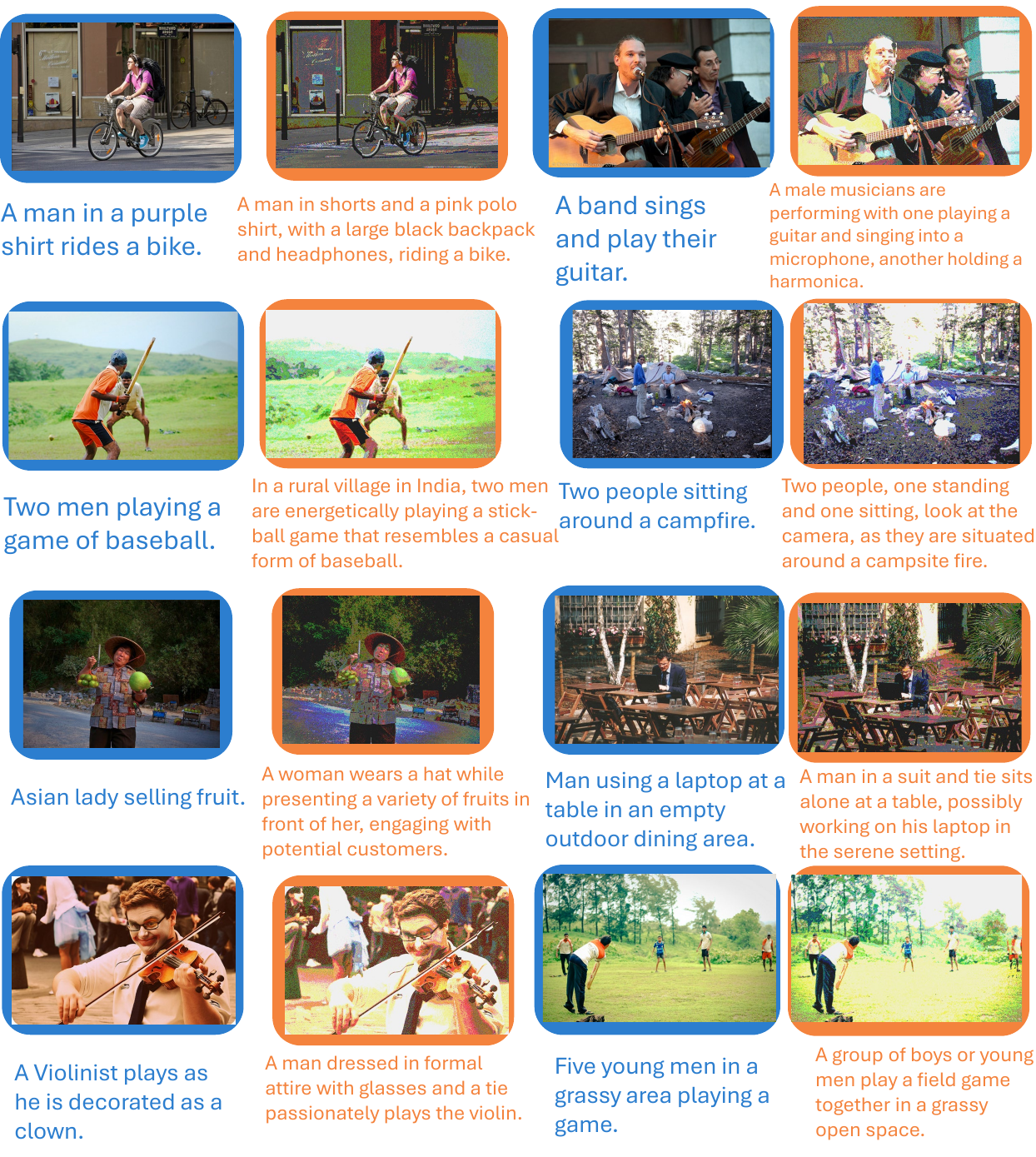}
    \caption{Flickr-30K before and after distillation. (Left) The original image-text pairs before the distillation. (Right) The image-text pairs after distillation.}
    \label{fig:vis_flicker}
\end{figure}

\begin{figure}[!t]
    \centering
    \includegraphics[page=2,width=1\linewidth]{images/appendix/visual.pdf}
    \caption{COCO before and after distillation. (Left) The original image-text pairs before the distillation. (Right) The image-text pairs after distillation.}
    \label{fig:vis_coco}
\end{figure}
\section{LLM Usage}

We primarily utilise large language models to assist with writing tasks. Specific applications include:

\begin{itemize}[leftmargin=1.2em]
    \item Inquiry regarding how to adjust the layout, such as spacing between elements like formulas, headings, algorithms, etc.
    \item Auxiliary table generation: Copy data from an Excel spreadsheet to an LLM to generate LaTeX table source code.
    \item To avoid excessive blank lines at the end of each line, inquire how the LLM might add or remove existing text.
    \item For some formulas, we employ large language models to generate preliminary versions, which are then manually revised.
    \item Enquire about LaTeX syntax, such as importing relevant packages and setting colours.
\end{itemize}

\end{document}